\newcommand*\iftodonotes{\if@todonotes@disabled\expandafter\@secondoftwo\else\expandafter\@firstoftwo\fi}  %
\definecolor{dandelion}{HTML}{FFD464}
\definecolor{bittersweet}{HTML}{C04F17}
\definecolor{mintgreen}{RGB}{152, 255, 152}
\newcommand{\centerBox}{{\raisebox{-1pt}{$\Box$}}}
\def\permtilde{{\widetilde{\perm}}}
\newcommand{\answercolor}{\color{MyTawny}}
\newcommand{\contextcolor}{\color{MyGreen}}
\newcommand{\goldcontextcolor}{\color{MyGold}}
\newcommand{\questioncolor}{\color{MyBlue}}
\newcommand{\textanswer}{{\answercolor \text{answer}}}
\newcommand{\textcontext}{{\contextcolor \text{context}}}
\newcommand{\textquestion}{{\questioncolor \text{question}}}
\newcommand{\modelname}[1]{{\tt{#1}}}
\newcommand{\indicator}[1]{\mathbbm{1}\mleft\{#1\mright\}}
\newcommand{\mychar}[0]{\sigma}
\newcommand{\charseq}{{{\boldsymbol{\mychar}}}}
\DeclareMathOperator*{\argmax}{arg\,max}
\crefname{section}{Section}{Sections}
\Crefname{section}{Section}{Sections}
\crefname{table}{Table}{Tables}
\crefname{figure}{Figure}{Figures}
\crefname{algorithm}{Algorithm}{Algorithms}
\crefname{equation}{Eq.}{Eqs.}
\crefname{appendix}{Appendix}{Appendices}
\crefname{thm}{Theorem}{Theorems}
\crefname{prop}{Proposition}{Propositions}
\crefname{cor}{Corollary}{Corollaries}
\crefname{observation}{Observation}{Observations}
\crefname{assumption}{Assumption}{Assumptions}
\crefname{hypothesis}{Hypothesis}{Hypotheses}
\theoremstyle{plain}
\newtheorem{example}{Example}[section]
\newtheorem{assumption}{Assumption}[section]
\newtheorem{hypothesis}{Hypothesis}[section]
\newcommand{\defn}[1]{{\textbf{#1}}}
\newcommand{\defequals}{\triangleq}
\newcommand{\kleene}[1]{{#1^*}}
\newcommand{\STR}{{\kleene{\Sigma}}}
\def\calD{{\mathcal{D}}}
\def\boldq{{\boldsymbol{q}\xspace}}
\def\boldc{{\boldsymbol{c}\xspace}}
\definecolor{MyGold}{HTML}{D8D000}
\definecolor{MyTawny}{HTML}{d55e00}
\definecolor{MyGreen}{HTML}{029e73}
\definecolor{MyBlue}{HTML}{0173b2}
\definecolor{MyOrange}{HTML}{de8f05}
\definecolor{MyBronze}{HTML}{ca9161}
\definecolor{MySilver}{HTML}{949494}
\definecolor{MyKerria}{HTML}{F8B500}
\definecolor{MyPurple}{HTML}{952b60}
\newcommand{\exampletext}[1]{{\contextcolor \small \textit{``#1''}}}
\def\corpus{{\mathcal{C}}}
\def\permsigma{{\color{MyOrange} \sigma}}
\def\perm{{\color{MyOrange} \pi}}
\def\bcdot{\mathbin{\boldsymbol{\cdot}}}
\newcommand{\documentd}{{\contextcolor \boldsymbol{d}}}
\newcommand{\question}{{\questioncolor \boldq}}
\newcommand{\context}{{\contextcolor \boldc}}
\newcommand{\contextDpi}{{\contextcolor \boldc_{\docset}(\perm)}}
\newcommand{\docset}{{\contextcolor \calD}}
\newcommand{\metric}{{g}}
\newcommand{\pmi}{{\textnormal{PMI}}}
\newcommand{\alphabet}{\Sigma}
\newcommand{\lm}{p} %
\newcommand{\lmprefix}{\overrightarrow{p}}
\newcommand{\answer}{{\answercolor \boldsymbol{a}}}
\newcommand{\decodeanswer}{{\answercolor \boldsymbol{\widetilde{a}}}}
\newcommand{\decodeanswerpi}{\decodeanswer_{\perm}}
\DeclarePairedDelimiterX{\infdivx}[2]{(}{)}{#1\;\delimsize\Vert\;#2}
\definecolor{red_fig}{HTML}{D95847}
\definecolor{blue_fig}{HTML}{5D7CE6}
\definecolor{skipcolor}{HTML}{CC78BC}
\definecolor{valuecolor}{HTML}{029E73}
\definecolor{querycolor}{HTML}{DE8F05}
\definecolor{keycolor}{HTML}{0173B2}
\renewcommand{\Pr}[2][]{\mathop{\mathbb{P}}_{\substack{#1}}\mleft(#2\mright)}
\title{Pointwise Mutual Information as a Performance Gauge for Retrieval-Augmented Generation}
\newcommand{\rug}{\emoji[icon]{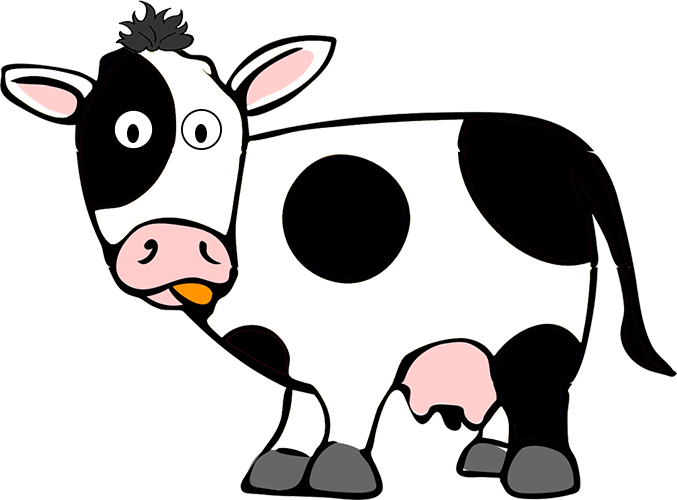}}
\newcommand{\ethz}{\emoji[icon]{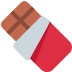}}
\newcommand{\toronto}{\emoji[icon]{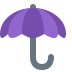}}
\author{
  Tianyu Liu$^{\ethz,}$\thanks{The first two authors contributed equally.} \quad Jirui Qi$^{\rug,}$$^{*}$ \quad Paul He$^{\toronto,}$\thanks{Work performed while at ETH Zürich.} \\
  {\bf Arianna Bisazza}$^{\rug}$ \quad {\bf Mrinmaya Sachan}$^{\ethz}$ \quad {\bf Ryan Cotterell}$^{\ethz}$ \\
  $^{\ethz}$ETH Zürich ~\;~ $^{\rug}$CLCG, University of Groningen\;~ $^{\toronto}$University of Toronto \\
  \texttt{\{\href{mailto:tianyu.liu@inf.ethz.ch}{tianyu.liu}, \href{mailto:mrinmaya.sachan@inf.ethz.ch}{mrinmaya.sachan},\href{mailto:ryan.cotterell@inf.ethz.ch}{ryan.cotterell}\}@inf.ethz.ch} \\
  \texttt{\{\href{mailto:j.qi@rug.nl}{j.qi}, \href{mailto:a.bisazza@rug.nl}{a.bisazza}\}@rug.nl, \href{mailto:hepaul@cs.toronto.edu}{hepaul}@cs.toronto.edu}
}
\begin{document}
\maketitle
\begin{abstract}
Recent work suggests that large language models enhanced with retrieval-augmented generation are easily influenced by the order, in which the retrieved documents are presented to the model when solving tasks such as question answering (QA).
However, there is no method to date that exploits this phenomenon to improve generation.
We fill this gap.
In this study, we show that the pointwise mutual information between a context and a question is an effective gauge for language model performance.
Importantly, this gauge does not depend on knowing the answer to the question \textit{a priori}.
Through experiments on two question-answering datasets and a variety of large language models, we find evidence for an empirical correlation between answer accuracy and pointwise mutual information.
Additionally, we propose two methods that use the pointwise mutual information between a document and a question as a gauge for selecting and constructing prompts that lead to better performance, whose effectiveness we demonstrate through experimentation.\footnote{Our code is available at \url{https://github.com/lyutyuh/poptimizer}.}

\end{abstract}

\section{Introduction}\label{sec:introduction}

\begin{figure}
\centering
\includegraphics[trim={0 20pt 0 20pt},clip,width=0.9\linewidth]{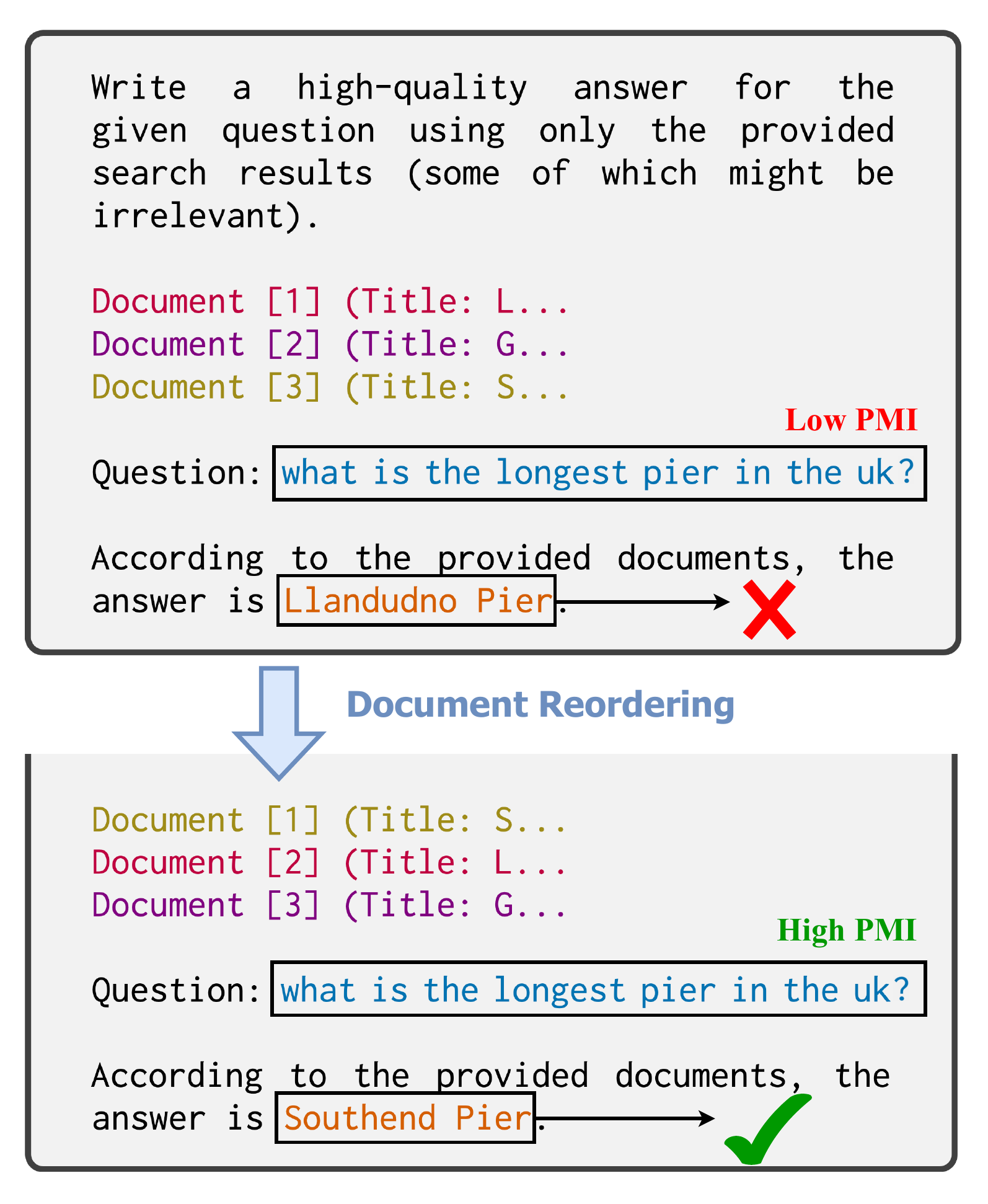}
\caption{For the \emph{same} question, a permutation of documents with a higher $\pmi(\question, \contextDpi)$ tends to lead to a better answer. 
}
\label{fig:motivation}
\end{figure}

Prompt design is an important factor when applying language models (LMs) to downstream tasks, including LMs that make use of retrieval-augmented generation \citep[RAG;][]{rag-lewis-2020}. Well-constructed prompts can improve LMs' answers to user-input questions and help generate responses that better align with user expectations \citep[][\textit{inter alia}]{gao-etal-2021-making, izacard2022atlas, liu-etal-2024-lost, schulhoff2024prompt, ma2024crafting}.\looseness=-1

Under the RAG framework, a prompt typically consists of three components. 
First, an instruction provides a textual description of the overall task and general guidance for the language model. Second, a specific 
{\textquestion} encodes the precise task or query the model should perform. 
Third, a {\textcontext} encodes a set of documents retrieved from an external source by a retriever \citep{karpukhin-etal-2020-dense, ni-etal-2022-large}.
Then, an {\textanswer} is sampled from the language model.
Previous work has explored various empirical approaches to prompt engineering, including the manual design of prompts that mimic human reasoning \cite{wei2023chainofthoughtpromptingelicitsreasoning,yao2023treethoughtsdeliberateproblem}. Recently, \citet{liu-etal-2024-lost} demonstrated that language model performance is significantly influenced by the \emph{order} of retrieved documents that comprise the context. 
Specifically, QA accuracy peaks when the gold document\footnote{In factual QA tasks, the document containing the ground truth answer is referred to as the gold document.} is positioned at the beginning or end of the context.
Although extensive experimental evidence was adduced to validate this phenomenon \cite{liu-etal-2024-lost}, the underlying mechanisms remain poorly understood. This gap in understanding limits the applicability of these findings in the design and optimization of prompts for real-world applications.\looseness=-1

While \citeposs{liu-etal-2024-lost} are interesting, choosing the optimal permutation of the documents requires knowledge of the $\textanswer$, and, thus, cannot be directly used to improve RAG.
In this article, we develop a proxy for the optimal permutation: We show that the pointwise mutual information between the {\textquestion} and the {\textcontext} under an LM acts as a useful proxy in determining the optimal permutation.
To our knowledge, ours is the first to present in-depth analyses of the relation between question likelihood and model performance under the RAG framework.

Our findings in this paper are summarized in the following list: \looseness=-1
\begin{itemize}[noitemsep, nolistsep]
    \item We show that the pointwise mutual information between the {\textquestion} and the {\textcontext} positively correlates with answer accuracy at the corpus level on NQ-Open \cite{kwiatkowski-etal-2019-natural,lee-etal-2019-latent} and ELI5 \cite{fan-etal-2019-eli5}.\looseness=-1
    \item Given a question and a fixed set of documents, we demonstrate a strong correlation between the position of the gold document, the PMI between the {\textquestion} and the {\textcontext}, and QA accuracy.\looseness=-1 
     \item We validate the effectiveness of using question likelihood as a gauge for prompt optimization and demonstrate that likelihood-based prompt optimization is a promising direction for future study.\looseness=-1
\end{itemize}

\section{Setting the Stage}\label{sec:setting-the-stage}

\subsection{Language Modeling and RAG} 

\paragraph{Language Modeling Background.}
Let $\alphabet$ be an \defn{alphabet}, i.e., a finite, non-empty set of \defn{tokens}.
A \defn{language model} $\lm$ is a distribution over $\STR$, the set of all strings with tokens drawn from $\alphabet$.
Let $Y$ be a $\STR$-valued random variable distributed according to $\lm$ and $\charseq \in \STR$. 
We define the \defn{prefix probability}\footnote{See 
\citet{vieira2024languagemodelstokenslanguage} for a more in-depth discussion.} $\lmprefix(\charseq)$ as the probability that $Y$ has $\charseq$ as a prefix:
\begin{subequations}
\begin{align}\label{eq:prefix-prob}
\lmprefix(\charseq) &\defequals 
\Pr[Y \sim \lm]{ Y \succeq \charseq} 
\\ &= \sum_{ \charseq' \in \STR } \indicator{\charseq' \succeq \charseq} \,  \lm(\charseq')
\end{align}
\end{subequations}
The conditional prefix probability $\lmprefix(\charseq' \mid \charseq) = \frac{\lmprefix(\charseq \bcdot \charseq')}{\lmprefix(\charseq')}$ tells us how certain the model is that $\charseq'$ naturally follows from its preceding string $\charseq$.
Finally, we define an \defn{infix probability}, i.e., the  probability of generating a string that contains $\charseq \centerBox \charseq';$ where as $\centerBox$ is a gap, as follows
\begin{subequations}
\begin{align}
&\lmprefix(\charseq  \centerBox \charseq'') \defequals 
\Pr[Y \sim \lm]{ Y \succeq \charseq \centerBox \charseq''} 
\\ & \,\, = \sum_{\charseq''' \in \STR } \sum_{\charseq' \in \STR } \indicator{ \charseq''' \succeq \charseq \charseq' \charseq'' } \,  \lm(\charseq''')
\end{align}
\end{subequations}

\paragraph{Retrieval-augmented Generation.}
Modern language models are often used to perform question-answering tasks. 
When solving such a task with a language model, string encoding the question {\questioncolor question} $\question \in \STR$ is given to the model.
We assume each question $\question$ has a unique correct {\answercolor answer} which we will denote $\answer$.
This is, of course, a simplifying assumption, but it does jibe with how question-answering is typically evaluated.
We will adorn a $\widetilde{\cdot}$, e.g., $\decodeanswer$, to indicate an {\answercolor answer} generated from $\lmprefix(\bcdot \mid \question)$ that may or may not be correct.
Generating $\decodeanswer$ from $\lmprefix(\bcdot \mid \question)$ may be done using either a deterministic method, e.g., beam search, or a stochastic method, e.g., ancestral sampling.\footnote{In this study, we consider greedy decoding, i.e., beam search with a beam of size 1.}
In RAG, the model is additionally given a set of documents $\docset = \{\documentd_k\}_{k=1}^{K}$, where $\documentd_k \in \STR$, and a permutation of the documents $\perm\colon \{ 1,\cdots,K\} \to \{1,\cdots,K \}$.
Given $\docset$ and $\perm$, a {\contextcolor context} $\context$ is constructed by concatenating the documents in the order defined by $\perm$, i.e., $\contextDpi \defequals \documentd_{\perm(1)}\bcdot  \cdots \bcdot\documentd_{\perm(K)}$. 
Then, we generate an answer from $\lmprefix(\bcdot \mid \context \bcdot \question)$.
We provide an example below.\looseness=-1
\begin{example}
    Consider $\docset = \{$\exampletext{Llandudno Pier is a Grade II* listed pier\textellipsis}, \exampletext{Garth Pier is a Grade II listed structure\textellipsis}, \exampletext{Southend Pier is a\textellipsis}$\}$, and $\perm(1) = 2, \perm(2)=1, \perm(3)=3$. 
    We have $\contextDpi = $ \exampletext{Garth Pier is\textellipsis Llandudno Pier is a Grade II* listed pier\textellipsis Southend Pier is\textellipsis}.
\end{example} 
\noindent Let $\decodeanswerpi$ denote an answer generated from $\lmprefix(\bcdot \mid \contextDpi\bcdot\question)$. 
To evaluate the quality of $\decodeanswerpi$, we define an evaluation metric $\metric(\decodeanswerpi, \answer)$.
In addition, we assume the ground truth answer $\answer$ to be unique for a \textquestion--\textcontext{} pair $(\question, \context)$.

\paragraph{Pointwise Mutual Information.} 
In RAG question answering, we consider the following \defn{pointwise mutual information} 
\begin{equation}\label{eq:pmi}
\pmi(\question, \context) \defequals \log \frac{\lmprefix(\question \mid \context)}{\lmprefix(\question)}
\end{equation}
between $\question$ and $\context$, where $\context= \documentd_{\perm(1)}\bcdot  \cdots \bcdot\documentd_{\perm(K)}$.
In other words, \Cref{eq:pmi} measures the degree of association of $\question$ with $\context$. 

\begin{figure}
\centering
\includegraphics[trim={0mm 70mm 0mm 70mm},clip,width=0.99\linewidth]{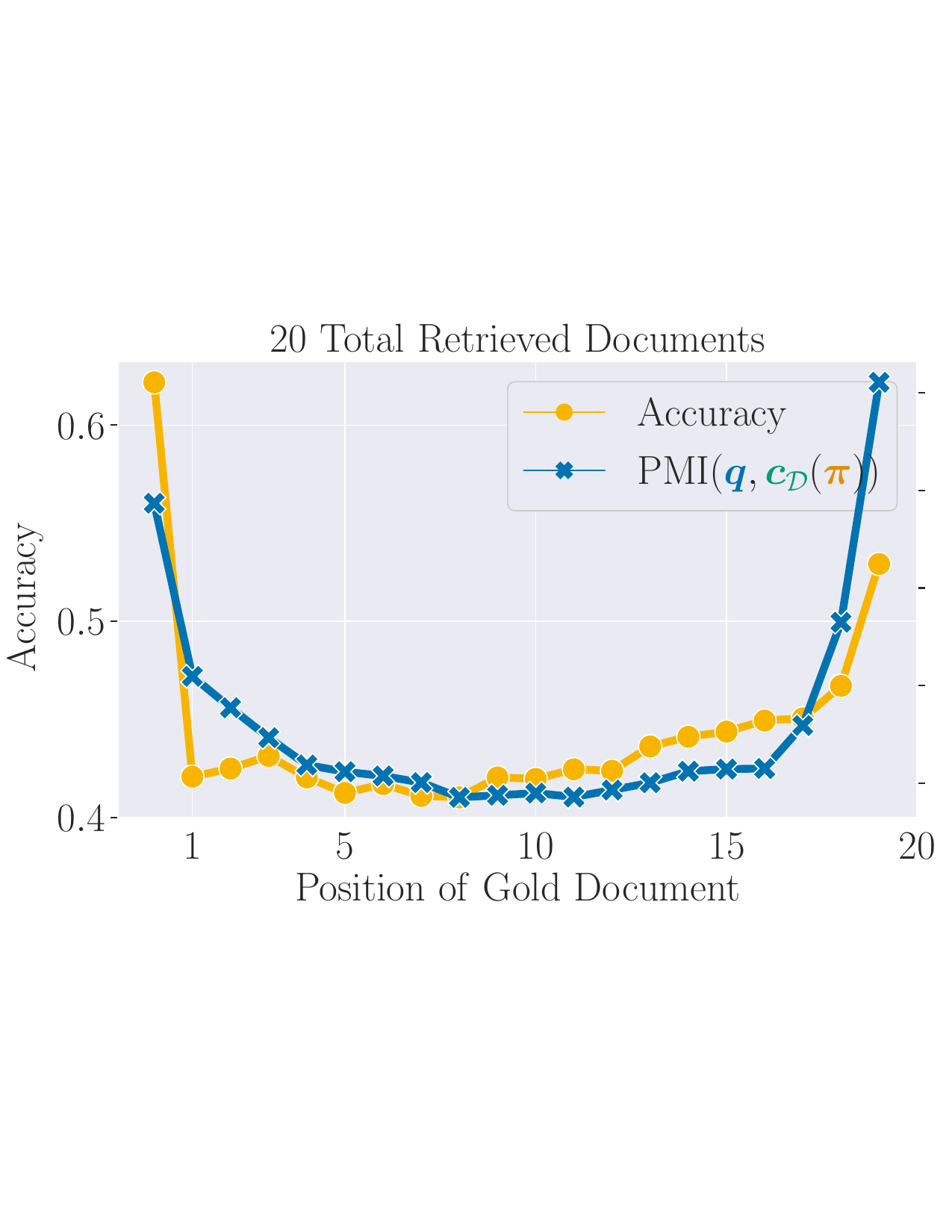}
\caption{
We observe that the PMI and QA accuracy trace a U-shaped curve---nearly in lockstep---as the gold document position within the context changes.
The result is computed with \modelname{LLaMA-3-8B}.
}
\label{fig:intro-p-question}
\end{figure}
\subsection{A Concrete Hypothesis}
Returning to the central goal of this paper, i.e., trying to find a proxy that helps determine the optimal permutation of the documents for RAG, we hypothesize that, given a question $\question$, a set of documents $\docset$, and the ground truth answer $\answer$, the pointwise mutual information $\pmi(\question, \contextDpi)$ correlates with $\log \frac{\lmprefix(\answer \mid \question \bcdot \contextDpi)}{1-\lmprefix(\answer \mid \question \bcdot \contextDpi)}$, the log odds ratio, and can be deemed a gauge for the expected accuracy of the generated answer.
In symbols, our hypothesis is as follows.\looseness=-1
\begin{hypothesis}\label{hyp:hypothesis}
    In RAG question answering, for a fixed question $\question$, a set of documents $\docset$ permuted by $\perm$, and the ground truth answer $\answer$, we have the following relation between $\pmi(\question, \contextDpi)$ and $\lmprefix(\answer \mid \question \bcdot \contextDpi)$
    \begin{equation}
    \begin{aligned}
    \pmi(\question&, \contextDpi) \\
    &= a \log \frac{\lmprefix(\answer \mid \question \bcdot \contextDpi)}{1-\lmprefix(\answer \mid \question \bcdot \contextDpi)} + b
\end{aligned}
    \end{equation}
    for constants $a \in \mathbb{R}_{>0}$, $ b \in \mathbb{R}$.
\end{hypothesis} 

\subsection{A Bit of Analysis}

\def\baranswer{\bar{\answer}}

In words, \Cref{hyp:hypothesis} says that when $\text{PMI}(\question, \contextDpi)$ is high, we expect the LM better respond to the question $\question$ with higher accuracy, and, moreover, this relationship is affine.
Although this is empirically true in many cases \cite{gonen-etal-2023-demystifying}, we offer an assumption that enables a derivation of this property. 
\begin{assumption}\label{ass:assumption}
    For all \textquestion--\textcontext{} pairs $(\question, \context)$, 
    let $\answer$ be the correct answer, then we have
    \begin{subequations}
        \begin{align}
             \lmprefix(\question \mid \contextDpi \centerBox \answer) &=   \lmprefix(\question \mid \contextDpi) \label{eq:correct} \\
            \lmprefix(\question \mid \contextDpi \centerBox \baranswer) &=   \lmprefix(\question) \label{eq:incorrect}
        \end{align}
    \end{subequations} 
    for any $\baranswer\in\STR$ such that $\baranswer \not\preceq \answer$.\footnote{\Cref{eq:correct} and \Cref{eq:incorrect} can be seen as a form of context-specific conditional independence \citep{boutilier+al.1996}.}
\end{assumption}
We now give a brief qualitative justification of
\Cref{ass:assumption}.
Conditioned on the event that the model \emph{incorrectly} answers the {\questioncolor question} given the {\contextcolor context}, \cref{eq:incorrect}  says that the question $\question$ is not dependent on the provided \textcontext.
Because, in RAG, we assume the correct \textanswer{} is given to the model in the \textcontext{} and the model's job is to retrieve it, our assumption corresponds to the notion that an incorrect response by the model should \emph{not} be influenced by the \textcontext.
\Cref{eq:correct} corresponds to the notion that since the correct \textanswer{} is already contained in the \textcontext, conditioning on the correct answer \textanswer{} does not provide any new information to generating $\question$.

\begin{restatable}{proposition}{propdistribution}\label{prop:distribution}
Under assumptions given in \cref{ass:assumption}, we have
\begin{equation}
\begin{aligned}
    \log &\frac{\lmprefix(\answer \mid \question \bcdot \contextDpi)}{1-\lmprefix(\answer \mid \question \bcdot \contextDpi)} \\
    &\qquad = \pmi(\question,\contextDpi) + C(\answer, \contextDpi)
\end{aligned}
\end{equation}
for an {\answercolor answer}-dependent constant $C(\answer, \contextDpi)$.
\end{restatable}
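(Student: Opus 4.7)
The plan is to combine Bayes' rule with \Cref{ass:assumption} so that the log odds of the correct answer decompose into $\pmi(\question, \contextDpi)$ plus an $\answer$-dependent remainder. The key observation is that Bayes' rule swaps a conditional on $\answer$ for a conditional on $\question$, and it is precisely the latter that \Cref{ass:assumption} controls, both for the correct answer $\answer$ and for arbitrary incorrect $\baranswer \not\preceq \answer$.

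First, I would fix an arbitrary $\baranswer \not\preceq \answer$ and apply Bayes' rule to the ratio $\lmprefix(\baranswer \mid \question \bcdot \contextDpi) / \lmprefix(\answer \mid \question \bcdot \contextDpi)$, rewriting it as the product of the likelihood ratio $\lmprefix(\question \mid \contextDpi \centerBox \baranswer) / \lmprefix(\question \mid \contextDpi \centerBox \answer)$ and the prior odds $\lmprefix(\baranswer \mid \contextDpi) / \lmprefix(\answer \mid \contextDpi)$. Plugging in \cref{eq:correct,eq:incorrect} collapses the first factor to $\lmprefix(\question) / \lmprefix(\question \mid \contextDpi) = e^{-\pmi(\question, \contextDpi)}$, which crucially does not depend on $\baranswer$. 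Rearranging yields the pointwise identity
\[
\lmprefix(\baranswer \mid \question \bcdot \contextDpi) \;=\; e^{-\pmi(\question, \contextDpi)} \cdot \frac{\lmprefix(\answer \mid \question \bcdot \contextDpi)}{\lmprefix(\answer \mid \contextDpi)} \cdot \lmprefix(\baranswer \mid \contextDpi).
\]

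Second, I would marginalize this identity by summing over the strings $\baranswer$ of length $\lvert \answer \rvert$ with $\baranswer \neq \answer$; each such $\baranswer$ satisfies $\baranswer \not\preceq \answer$, so \Cref{ass:assumption} applies uniformly. The left-hand side collapses to $1 - \lmprefix(\answer \mid \question \bcdot \contextDpi)$, while the remaining sum $\sum_{\baranswer} \lmprefix(\baranswer \mid \contextDpi)$ collapses to $1 - \lmprefix(\answer \mid \contextDpi)$. Rearranging and taking logarithms gives
\[
\log \frac{\lmprefix(\answer \mid \question \bcdot \contextDpi)}{1 - \lmprefix(\answer \mid \question \bcdot \contextDpi)} \;=\; \pmi(\question, \contextDpi) \;+\; \log \frac{\lmprefix(\answer \mid \contextDpi)}{1 - \lmprefix(\answer \mid \contextDpi)},
\]
so setting $C(\answer, \contextDpi) \defequals \log \tfrac{\lmprefix(\answer \mid \contextDpi)}{1 - \lmprefix(\answer \mid \contextDpi)}$ completes the derivation.

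The main obstacle I anticipate is justifying the marginalization step cleanly. Because $\lmprefix$ lives over $\STR$ rather than as a distribution over a finite answer set, one has to pick the right partition of the event ``the continuation does not have $\answer$ as a prefix'' and confirm that every $\baranswer$ indexing that partition lies in the domain of \Cref{ass:assumption} (i.e., satisfies $\baranswer \not\preceq \answer$). The length-$\lvert\answer\rvert$ cylinders are the natural choice, but one must check that Bayes' rule is being applied cylinder by cylinder before summing, and that the $\answer$-dependent prefactor really does come out of the sum. The remaining algebra is routine once this bookkeeping is nailed down.
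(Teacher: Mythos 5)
Your proposal is correct and follows essentially the same route as the paper's proof: Bayes' rule converts the posterior odds of $\answer$ into question-likelihood ratios, \Cref{ass:assumption} collapses those ratios to $e^{-\pmi(\question,\contextDpi)}$ uniformly over the incorrect answers, and summing over the incorrect answers yields the claim. The only differences are presentational --- you derive a pointwise identity for each $\baranswer$ and then marginalize, which gives the constant the cleaner closed form $C(\answer, \contextDpi) = \log \frac{\lmprefix(\answer \mid \contextDpi)}{1 - \lmprefix(\answer \mid \contextDpi)}$, whereas the paper applies Bayes' rule termwise inside the sum and leaves $C$ as a ratio of infix probabilities; and the bookkeeping worry you flag (whether the chosen family of $\baranswer$'s exactly exhausts the event of mass $1-\lmprefix(\answer \mid \question \bcdot \contextDpi)$) is present, and likewise glossed over, in the paper's own sum over all of $\kleene{\alphabet}$ with the indicator $\indicator{\baranswer \not\preceq \answer}$.
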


\begin{proof}
See \Cref{app:proof}.
\end{proof}
\noindent In other words, the pointwise mutual information $\pmi(\question, \contextDpi)$ is equivalent up to an additive constant to 
the log odds ratio $\log\frac{\lmprefix(\answer \mid \question \bcdot \contextDpi)}{1-\lmprefix(\answer \mid \question \bcdot \contextDpi)}$.

\paragraph{Foreshadowing the Results.}
In the empirical portion of this paper, we test \Cref{hyp:hypothesis} through experiments on two QA benchmarks—NQ-Open and ELI5—using a range of state-of-the-art open LMs, including \modelname{LLaMA-2}, \modelname{LLaMA-3}, \modelname{LLaMA-3.1}, \modelname{Mistral-v0.3}, and \modelname{MPT}.
Our findings demonstrate that, as the position of relevant information within the input context $\context$ varies, the pointwise mutual information, $\pmi(\question,\contextDpi)$ and expected answer accuracy $\lmprefix(\answer \mid \question \bcdot \contextDpi)$ vary in tandem, i.e., they are strongly correlated.
This correlation is illustrated in \cref{fig:intro-p-question}. Specifically, LMs tend to provide better responses to questions where the documents in the context are permuted so as to have higher $\pmi(\question,\contextDpi)$.
These results suggest that PMI serves both as a \emph{performance gauge} and as a strong indicator of the position of task-relevant information within the input context. Building on this insight, we propose a direction for prompt optimization through two specific methods. 
The first selects a permutation $\perm$ of the documents that maximizes $\pmi(\question,\contextDpi)$ to construct the context $\context$. 
The second builds on the findings of \citet{liu-etal-2024-lost} that the curve traced by permuting the position of the gold document results in a U-shaped curve.
We exploit this finding to develop an efficient prompt ordering algorithm. 
Further experimentation demonstrates that our methods enhance answer accuracy across both datasets for instruction-tuned and base models alike, with the second approach achieving even greater gains.

\section{PMI Correlates with Performance}
As discussed in \cref{sec:setting-the-stage}, 
our first goal is to determine how $\pmi(\question, \contextDpi)$ changes as a function of the permutation $\perm$ of the documents $\docset$.
Due to \Cref{hyp:hypothesis}, we expect a strong correlation between $\pmi(\question, \contextDpi)$ and expected answer accuracy. 

\begin{figure*}
    \centering
    \includegraphics[clip,width=0.95\linewidth]{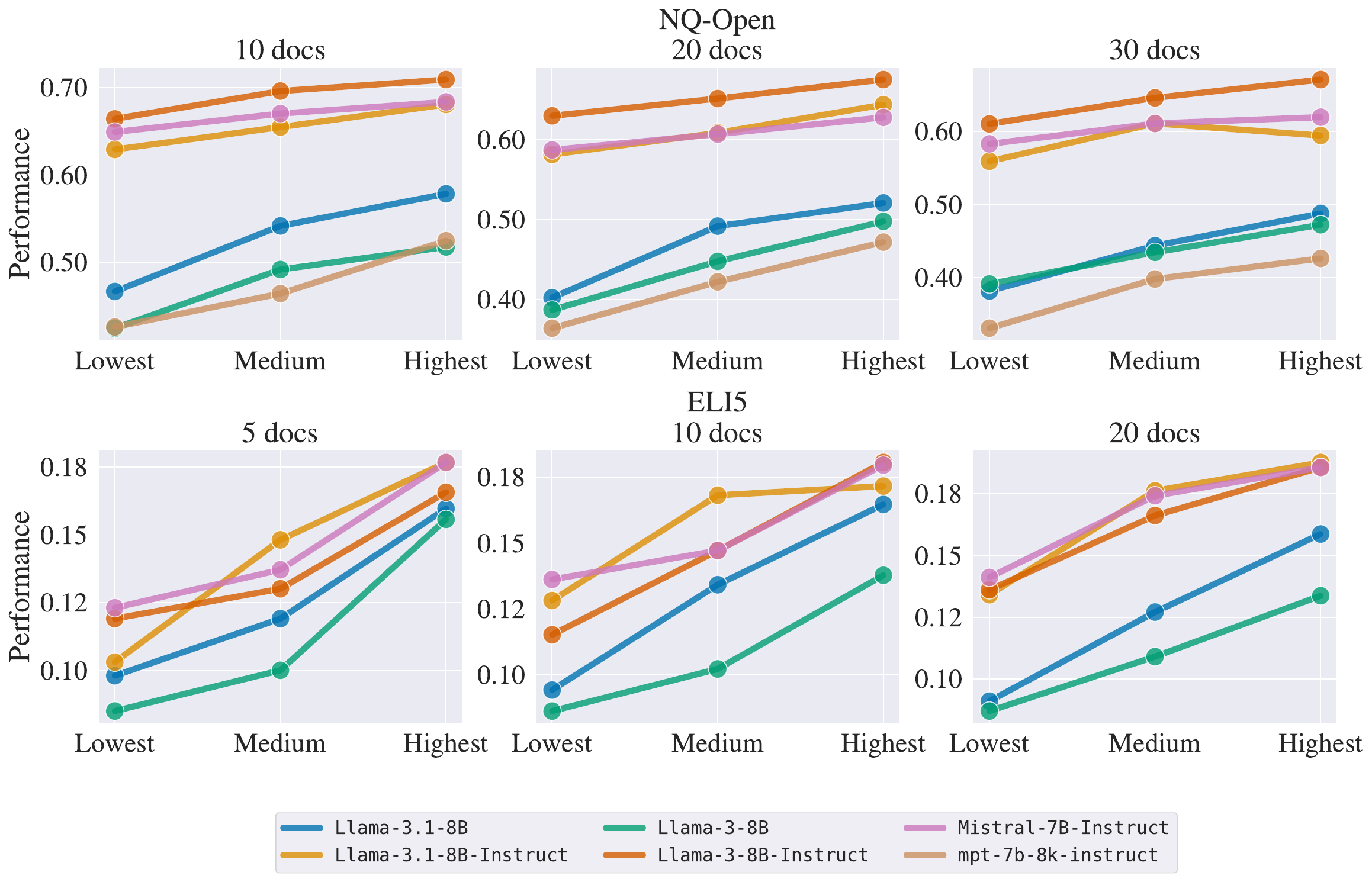}
    \caption{Corpus-level correlation between $ \pmi(\question, \contextDpi)$ and answer accuracy on NQ-Open and ELI5.}
    \label{fig:double-corpus-level-correlation}
    \vspace{-5pt}
\end{figure*}

\subsection{Experimental Setup}
\label{sec:setups}
\paragraph{Datasets.}
We run experiments on two question-answering datasets, namely NQ-Open and ELI5. 
Details of the datasets are given in \cref{app:datasets}.
Let $\corpus \defequals \{(\question_m, \docset_m, \answer_m)\}_{m=1}^{M}$ be a dataset of triples, where each $\answer_m$ represents the ground truth answer to $\question_m$.\looseness=-1

\paragraph{Empirical Metrics for LM evaluation.} \label{sec:answer-acc-metrics}
In practice, LM performance is often evaluated with rule-based empirical metrics, denoted $\metric$, such as accuracy, instead of the conditional likelihood $\lmprefix(\answer \mid \question \bcdot \contextDpi)$. 
Although mathematically quantifying the relation between $\metric$ and $\pmi$ is difficult, we contend that they positively correlate due to recent progress on language model calibration \cite{zhao2023calibrating}, i.e., the alignment between $\lmprefix(\answer \mid \question \bcdot \contextDpi)$ and $\metric(\decodeanswer,\answer)$.
On {NQ-Open}, the ground truth answer for each question is either a word or a short phrase.
The accuracy is $1$ when the LM response contains the correct answer as a substring; otherwise, the accuracy is $0$. %
Following \citet{liu-etal-2024-lost}, we compute the model's average accuracy over the entire dataset. 
On {ELI5}, the correct answer for each question comprises three sub-claims, and a correct answer is expected to include all of these sub-claims.
Examples are illustrated in \cref{{app:illustration_evaluation_metric}}.\
We follow \citet{gao-etal-2023-enabling} and take the recall rate of sub-claims to be the evaluation metric, which takes value from $\{0, \sfrac{1}{3}, \sfrac{2}{3}, 1 \}$.
The \modelname{TRUE} model,\footnote{\url{https://huggingface.co/google/t5_xxl_true_nli_mixture}} a \modelname{T5-XXL} model fine-tuned on natural language inference (NLI) tasks, is used to automatically evaluate whether a response entails a sub-claim. \looseness=-1

\paragraph{Language Model Settings.}
Most state-of-the-art closed LMs, such as OpenAI’s \modelname{ChatGPT} and Anthropic’s \modelname{Claude}, do not provide direct access to the likelihood of either input or output tokens. Thus, we select leading open LMs for our experiments, focusing on three families: \modelname{LLaMA-2}, \modelname{LLaMA-3} \cite{touvron2023llamaopenefficientfoundation}, and \modelname{Mistral-v0.3} \cite{jiang2023mistral7b}. We also evaluate \modelname{MPT} on NQ-Open.\footnote{In our preliminary experiments, \modelname{MPT} fails to generate sufficiently long responses on ELI5, resulting in performance that is not directly comparable to other LMs.} Following the settings of \citet{liu-etal-2024-lost}, we adopt greedy decoding for all models when generating responses. We set the maximum number of decoded tokens to $100$ on NQ-Open and $300$ on ELI5.

\paragraph{Prompt Templates.} 
We follow the suggested usage and prompt formatting instructions of each LM we use.
For chat and instruction-tuned models, we present the context and query to the LM in the role of \texttt{user}, and elicit the response from LMs in the role of \texttt{assistant}.
For base models, we elicit responses from LMs as sentence completion. \looseness=-1

\subsection{Technical Interlude: Sets of Permutations}\label{sec:sets-of-permutation}
In many of our experiments, we would like to take a sum or a max over all permutations of $K$ items, i.e., 
take a sum or max over the symmetric group $\mathbb{S}_K$.
However, $|\mathbb{S}_K| = K!$, which grows too large to enumerate efficiently. 
To cope with the size of $\mathbb{S}_K$, in this paper, we perform computations over a subset of $\mathbb{S}_K$.
Specifically, starting a user-specified permutation $\perm$, we consider the cyclic group generated by $(\perm)$ where the group operation is functional composition, as is standard.
Let $\permsigma=(1,2,\cdots,K)$ be a shifting permutation.
It is easy to see that $|(\perm)| = K$, and
the $k^{\text{th}}$ element of $(\perm)$ is given by $ \permtilde_k \defequals \permsigma^{k-1}\circ\perm = \underbrace{\permsigma \circ \cdots \circ \permsigma }_{(k-1)\text{ times}}\circ \perm$,\footnote{Here, $\circ$ denotes function composition, the standard product of permutations.} or, equivalently we have
\begin{equation}
    \permtilde_k(i) = (i + k - 1) \mod K,
\end{equation}
\begin{example}
    Given a permutation $\perm = (1,2,3)$. 
    The cyclic group $(\perm)$ generated by $\perm$ is equal to $\{\permtilde_1, \permtilde_2, \permtilde_3\}$ where $\permtilde_1 = (1,2,3)$, $\permtilde_2 = (2,3,1)$, and $\permtilde_3 = (3,1,2)$.
\end{example}

\subsection{Results} \label{sec:results}
We now discuss our empirical findings. 

\paragraph{Corpus-Level Correlation.}
As our first evaluation metric, we consider a corpus-level correlation.
For each $\question_{m}, \docset_{m}$ in a corpus $\corpus$, we compute the average PMI for the $m^{\text{th}}$ instance as follows
\begin{equation}
    \rho_m \defequals \frac{1}{K}\sum_{k=1}^K  \pmi(\question_{m}, \context_{\docset_{m}}(\permtilde_{k}))
\end{equation}
We then bin the elements of $\{\rho_m\}_{m=1}^M$ into three bins according to which tertile they fall it when $\{\rho_m\}_{m=1}^M$ are arranged into a histogram.
Then, we compute the average sub-claim recall rate (ELI5) and accuracy (NQ-Open) for each bin.
Our results, shown in \cref{fig:double-corpus-level-correlation}, demonstrate that LMs tend to perform better on the prompts with a higher $\pmi(\question, \contextDpi)$ compared to those with lower $\pmi(\question, \contextDpi)$. 

\paragraph{Instance-Level Correlation.} \label{sec:instance-correlation}
We further analyze the instance-level correlation between $\pmi(\question, \contextDpi)$ and accuracy by varying $\textcontext$ while keeping the $\textquestion$ fixed. %
In symbols, we compute
\begin{equation}
    \eta_k \defequals \frac{1}{M}\sum_{m=1}^M  \pmi(\question_{m}, \context_{\docset_{m}}(\permtilde_{k}))
\end{equation}
where $\permtilde_k$ is the permutation in which the $k$-th document $\documentd_k$ contains relevant information.
We then plot the curve of $\{\eta_k\}_{k=1}^K$, to see how $\pmi$ is affected by the position of a relevant document within a context.

\paragraph{Revisiting \citet{liu-etal-2024-lost}.}\label{sec:nq-open-instance-level}
We now revisit the findings of \citet{liu-etal-2024-lost}, who observed a drop in \emph{QA accuracy} when the gold document is positioned within the middle of $\context$.
We first experiment on NQ-Open by varying the position of the gold document\footnote{In NQ-Open, exactly \emph{one} retrieved document is marked as the gold document for each question.} in $\context$. 
The set of retrieved documents and the order of non-gold documents remain the same.
As the gold document is placed in different positions in $\context$, we find that both $\pmi(\question, \contextDpi)$ and QA accuracy fluctuate---nearly in lockstep.
To further explore this correlation between $\pmi(\question,\context)$ and QA accuracy, we calculate the expected accuracy with the prompt of the highest and lowest $\pmi(\question, \contextDpi)$. %
Results are given in \cref{tab:instance-level-correlation-nqopen}, showing that LMs perform better when the document order in the prompt leads to the highest $\pmi(\question, \contextDpi)$; while the prompt with the lowest $\pmi(\question, \contextDpi)$ results in inferior performance.

\begin{figure*}[!t]
    \centering
    \begin{subfigure}{0.49\textwidth}
        \centering
        \includegraphics[trim={0mm 60mm 0mm 70mm},clip,width=\linewidth]{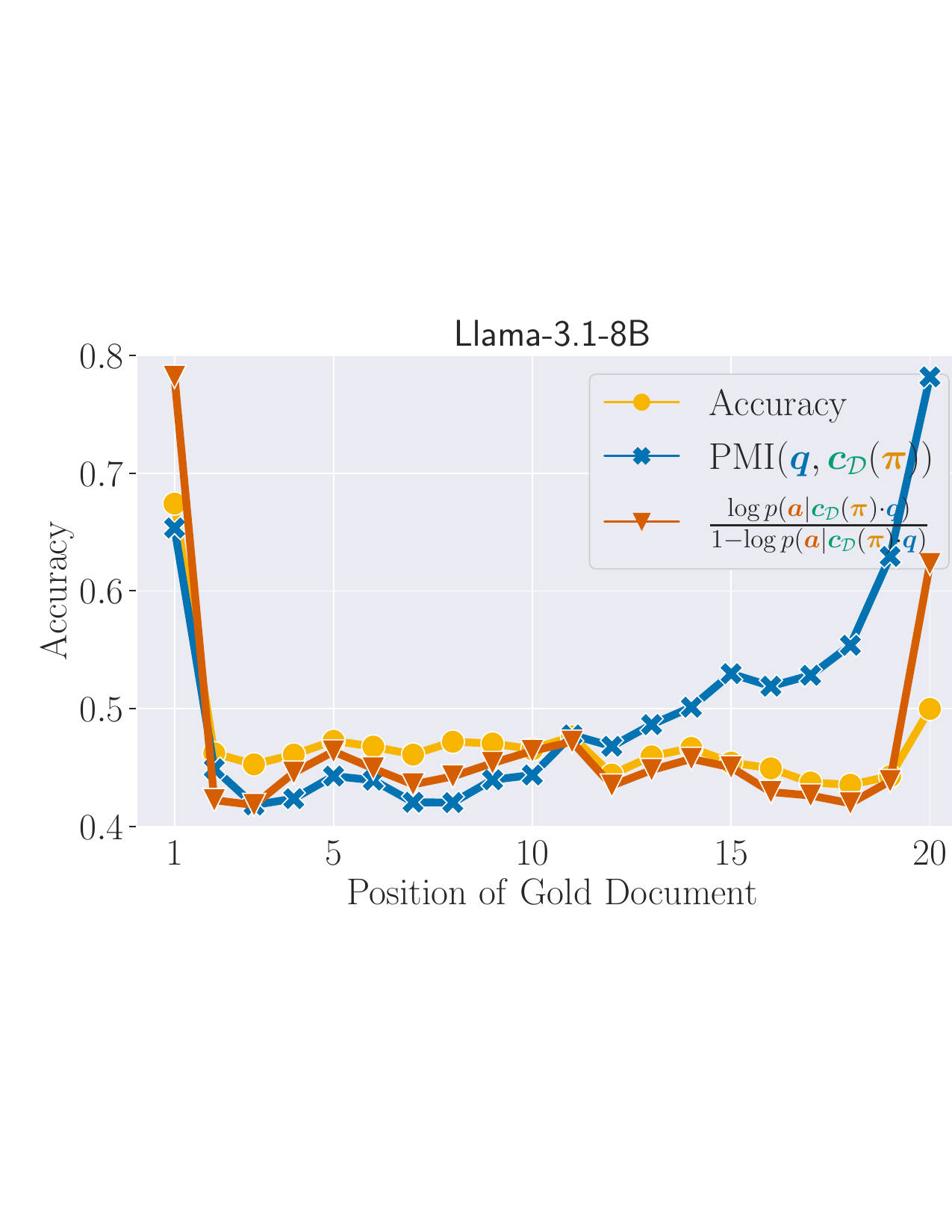}
        \caption{}
        \label{fig:20docs-llama-3.1}
    \end{subfigure}
    \begin{subfigure}{0.49\textwidth}
        \centering
        \includegraphics[trim={0mm 60mm 0mm 70mm},clip,width=\linewidth]{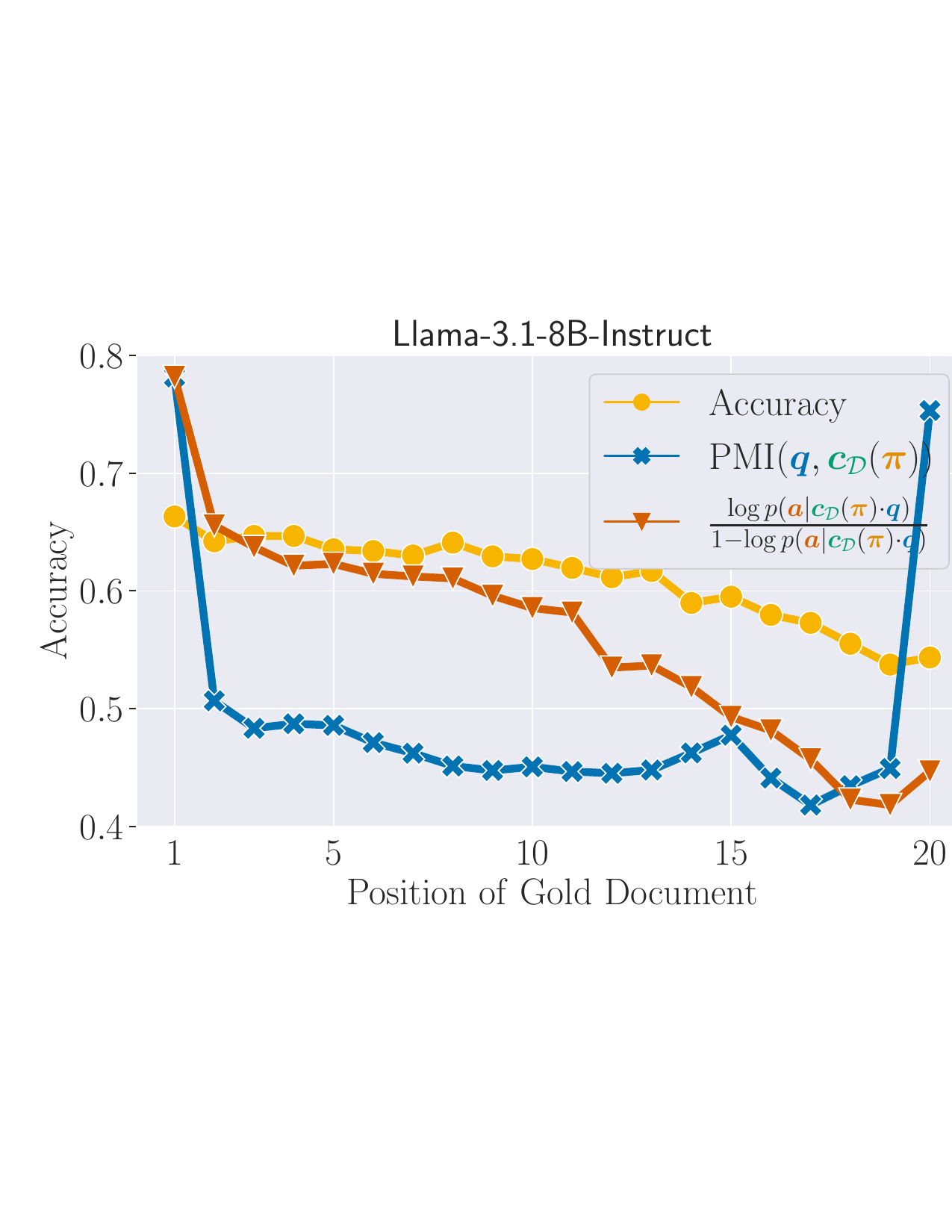}
        \caption{}
        \label{fig:20docs-llama-3.1-inst}
    \end{subfigure}
    \caption{QA accuracy, PMI, and log odds ratio of answer likelihood on 20 docs evaluated on \modelname{LLaMA-3.1-8B} and \modelname{LLaMA-3.1-8B-Instruct}.}
    \label{fig:20docs-llama-3.1-all}
    \vspace{-5pt}
\end{figure*}

\begin{table*}
\centering
\small
\resizebox{0.99\linewidth}{!}{
\begin{tabular}{cc@{~} cccccc @{}}
\toprule
\#Doc & $ \pmi(\question, \contextDpi)$ & \modelname{Mistral-7B-Inst} & \modelname{LLaMA-3-8B} & \modelname{LLaMA-3.1-8B} & \modelname{LLaMA-3-8B-Inst} & \modelname{LLaMA-3.1-8B-Inst} & \modelname{MPT-7B-8K-Inst}\\
\midrule
\\[-12.5pt]
\rowcolor{black!10} \multicolumn{8}{c}{NQ-Open} \\[-1.3pt]
\midrule
\multirow{2}{*}{10} & Highest & \textbf{68.69}  (-2.52) & \textbf{54.04} (-1.84) & \textbf{56.72} (-2.41) & \textbf{71.58} (-1.80) & \textbf{66.13} (-2.16) &  \textbf{48.93}  (-2.80)    \\
& Lowest & 66.98  (-2.89) & 49.30 (-2.03) & 53.29 (-2.72) & 71.29 (-2.01) & 65.70 (-2.43) &  46.97  (-3.38)  \\
\midrule
\multirow{2}{*}{20} & Highest & \textbf{64.86}  (-2.45) & \textbf{52.05} (-1.99) &\textbf{52.50} (-2.40) & \textbf{69.00} (-1.83) & \textbf{62.97} (-2.21)  &  \textbf{42.25}  (-2.70) \\
& Lowest & 62.60  (-2.83) & 46.91 (-2.03) & 48.51 (-2.72) & 67.68 (-2.01) & 61.05 (-2.43) & 42.09  (-3.23) \\
\midrule
\multirow{2}{*}{30} & Highest & \textbf{57.70}  (-2.52) & \textbf{50.30} (-1.88) & \textbf{50.00} (-2.60) & 64.36 (-1.84) & \textbf{60.95} (-2.41) & \textbf{39.31}  (-2.56) \\
& Lowest & 53.96  (-2.92) & 45.27 (-2.03) & 46.42 (-2.83) & \textbf{65.12} (-2.03) & 59.55 (-2.65)  & 39.12  (-3.05) \\
\bottomrule
\end{tabular}
}
\caption{Instance-level correlation between $\pmi(\question, \contextDpi)$ and answer accuracy. We compute the average answer accuracy over prompts that yield the highest and lowest $\pmi(\question, \contextDpi)$ as the gold document placed at different positions in the document sequence for each instance. The answer accuracy and the average $\pmi(\question, \contextDpi)$ are reported in the table.}
\label{tab:instance-level-correlation-nqopen}
\end{table*}

\paragraph{Experiments on ELI5.} \label{sec:eli5-instance-level}
Compared to NQ-Open, ELI5 is a more challenging long-form QA dataset where questions are mostly about \textit{how}/\textit{why}/\textit{what}, and the answers are expected to be more comprehensive and cover multiple aspects. \looseness=-1
Due to the lack of gold document annotations on ELI5,
we adopt permutations from the cyclic group $(\perm)$ and random shuffling. 
In random shuffling for $K$ documents, we randomly shuffle the document set $K$ (i.e., same as the number of documents) times and obtain $K$ document sequences for consistency. 
Given multiple prompts for a question, among which only the document orders in the $\textcontext$ are different, we calculate the average performance of the prompts with the highest and lowest $\pmi(\question, \contextDpi)$ for each question in the same fashion as described in \cref{sec:nq-open-instance-level} for NQ-Open.
Results in \cref{tab:instance-level-correlation-eli5} show that LMs achieve higher answer accuracy on the prompts with the highest $\pmi(\question, \contextDpi)$, compared with the prompts with the lowest $\pmi(\question, \contextDpi)$. 
This indicates LMs can better answer questions with higher question likelihood through document shuffling, demonstrating the strong instance-level correlation between $\pmi(\question, \contextDpi)$ with answer accuracy. \looseness=-1

\begin{table*}
\centering
\small
\resizebox{0.95\linewidth}{!}{%
\begin{tabular}{cc@{~} ccccc @{}}
\toprule
\#Doc & $\pmi(\question, \contextDpi)$ & \modelname{Mistral-7B-Inst} & \modelname{LLaMA-3-8B} & \modelname{LLaMA-3.1-8B} & \modelname{LLaMA-3-8B-Inst} & \modelname{LLaMA-3.1-8B-Inst}  \\
\midrule
\\[-12.5pt]
\rowcolor{black!10} \multicolumn{7}{c}{ELI5 with Rotational permutation} \\[-1.3pt]
\midrule
\multirow{2}{*}{5} & Highest & \textbf{13.97} (-3.72)  & \textbf{11.37} (-2.23) & \textbf{12.60} (-2.28) & \textbf{14.23} (-2.21) & \textbf{13.97} (-2.26)  \\
& Lowest & 13.50 (-4.06) & 11.10 (-2.39) & 12.50 (-2.43) & 13.17 (-2.54) & 13.93 (-2.48) \\
\midrule
\multirow{2}{*}{10} & Highest & \textbf{15.23} (-3.53) & 11.27 (-2.19) & 12.50 (-2.29) & \textbf{14.50} (-2.10) & \textbf{16.17} (-2.23) \\
& Lowest & 14.47 (-3.99) & \textbf{11.50} (-2.39) & \textbf{13.10} (-2.48) & 14.07 (-2.55) & 15.77 (-2.54) \\
\midrule
\multirow{2}{*}{20} & Highest & \textbf{16.20} (-2.13) & 11.13 (-2.19) & \textbf{12.77} (-2.28) & \textbf{16.20} (-2.13) & \textbf{17.17} (-2.18) \\
& Lowest & 15.80 (-2.73) & \textbf{11.20} (-2.42) & 12.13 (-2.48) & 15.80 (-2.73) & 15.67 (-2.54) \\ 
\midrule
\\[-12.5pt]
\rowcolor{black!10} \multicolumn{7}{c}{ELI5 with Random Shuffling} \\[-1.3pt]
\midrule
\multirow{2}{*}{5} & Highest & \textbf{14.27} (-3.73) & 10.73 (-2.24) & \textbf{12.57} (-2.28) & \textbf{14.10} (-2.23) & \textbf{14.20} (-2.27)  \\  
& Lowest & 14.10 (-4.04) & \textbf{11.20} (-2.39) & 12.33 (-2.42) & 12.77 (-2.52) & 14.00 (-2.48)  \\  
\midrule
\multirow{2}{*}{10} & Highest & \textbf{15.63} (-3.54) & \textbf{11.47} (-2.19) & \textbf{12.73} (-2.29) & \textbf{15.70} (-2.11) & \textbf{16.90} (-2.23) \\  
& Lowest & 15.07 (-3.97) & 11.23 (-2.39) & 12.20 (-2.48) & 14.57 (-2.52) & 16.70 (-2.53) \\  
\midrule
\multirow{2}{*}{20} & Highest & 16.10 (-3.44) & 10.83 (-2.19) & \textbf{12.60} (-2.28) & \textbf{16.13}(-2.14) & \textbf{17.20} (-2.18) \\  
& Lowest & \textbf{16.53} (-4.00) & \textbf{11.20} (-2.42) & 11.87 (-2.49) & 15.53 (-2.71) & 17.10 (-2.54) \\
\bottomrule
\end{tabular}
}
\caption{Instance-level correlation between $\pmi(\question, \context)$ and answer accuracy on ELI5. 
The average QA accuracy is computed over prompts that yield the highest and lowest $\pmi(\question, \context)$ as the input documents are reordered with (1) rotational reordering and (2) random shuffling as introduced in \cref{sec:eli5-instance-level}. The QA accuracy and the average $\pmi(\question, \context)$ are reported in the table.}
\label{tab:instance-level-correlation-eli5}
\vspace{-5pt}
\end{table*}

\section{Improving RAG via Reordering}
In \cref{sec:instance-correlation}, we offered evidence for \Cref{hyp:hypothesis}, i.e., that $\pmi(\question, \contextDpi)$ correlates with model performance.
In light of this finding, we propose two methods to permute the documents presented to the LM in RAG \emph{without} knowledge of the {\answercolor answer}.\looseness=-1

\subsection{Method 1: Search by PMI}
Our empirical findings showed that the permutation of the documents in the {\contextcolor context} that leads to the highest value of $\pmi(\question, \contextDpi)$ leads to superior performance on QA tasks. 
This suggests a natural algorithm
\begin{equation}
 \perm^\star = \argmax_{\perm \in \mathbb{S}_K} \pmi(\question, \contextDpi)
\end{equation}
However, as discussed in \Cref{sec:sets-of-permutation}, the set of all permutations (the symmetric group) $\mathbb{S}_K$ is too large to enumerate.
Thus, we fall back on a simple approximation.
Given a user-provided permutation $\perm$, we search over the cyclic group generated by $\perm$, denoted as $(\perm)$.
Using the notation introduced in \Cref{sec:sets-of-permutation}, we choose
$\permtilde_{k^\star}$ where we select
\begin{equation}
k^\star = \argmax_{k=1}^K \pmi(\question, \context_{\docset}(\permtilde_{k})),
\end{equation} 
where $\permtilde_{k}$ is defined in \Cref{sec:sets-of-permutation}. 

\subsection{Method 2: Search by Curvature}

We now develop a second algorithm based on the observation in \cref{fig:20docs-llama-3.1-all} that accuracy and PMI change simultaneously and exhibit a U-shaped curve as the gold document position within the permutation of documents in $\context$.
Our algorithm is based on a discrete notion of convexity and an assumption based on our findings in \cref{sec:results}, which we introduce in the abstract below.

\paragraph{Technical Interlude Discrete Convexity.}

\def\tauperm{{\tau}}
\def\taupermtilde{{\widetilde{\tauperm}}}
A sequence of real values $\{a_n\}_{n=1}^N$ is called \defn{convex} if we have
\begin{equation}\label{eq:convex}
 \Delta_n^2 \defequals 2 a_n - a_{n+1} - a_{n-1}  \leq 0 
\end{equation}
for all $n \in \{2, \ldots, N-1\}$.
In the abstract, the problem we wish to solve 
is this: Given an arbitrary finite sequence of reals $\{b_n\}_{n=1}^N$, find a permutation $\tau \colon [N] \rightarrow [N]$ that renders $\{b_n\}_{n=1}^N$ convex, i.e., that \Cref{eq:convex} holds after applying the permutation to the sequence's indices. 
We call such a choice of $\tau$ a \defn{convex permutation}.
Note that convex permutations may not always exist.\footnote{E.g., the sequence $[0,1,1,1]$ has no convex permutation.} 
To achieve a U-shape curve, do not just want a convex permutation, but in addition the one that results in a convex sequence that has as much upwards curvature as possible.
In other words, if $\tau$ is a convex permutation, 
then \emph{in addition} we want the following sum to be \emph{minimized}
\begin{subequations}\label{eq:convex-2}
\begin{align}
 \sum_{n=2}^{N-1} \Delta_n^2 &= -(b_{1} + b_{N}) + \sum_{n=2}^{N-1} b_{m} \\
   &= -2(b_{1} + b_{N}) + B \\
   &\leq 0,
\end{align}
\end{subequations}
where $B \defequals \sum_{n=1}^N b_n$.
However, because $B$ is constant, the total curvature induced by a convex permutation $\tau$ \emph{only} depends on $b_{\tau(1)}$ and $b_{\tau(N)}$.
This implies that we simply need to choose the endpoints to be those elements of $\{b_{\tau(n)}\}_{n=1}^N$ that are largest; we can always permute the remaining $(N-2)$ elements to 
ensure the permutation is convex afterward. 
Thus, relaxing the requirement that the permutation be convex, we choose a permutation $\tau$ such that $b_{\tau(1)} + b_{\tau(N)}$ is maximized.
This definition motivates a new definition: We call a sequence $\{b_n\}_{n=1}^N$ is \defn{U-shaped} iff $b_1 \geq b_i$ and $b_N \geq b_i$ for $i \in \{2,3,\cdots,N-1\}$.

\begin{figure}
    \centering    \includegraphics[trim={0mm 60mm 0mm 70mm},clip,width=0.99\linewidth]{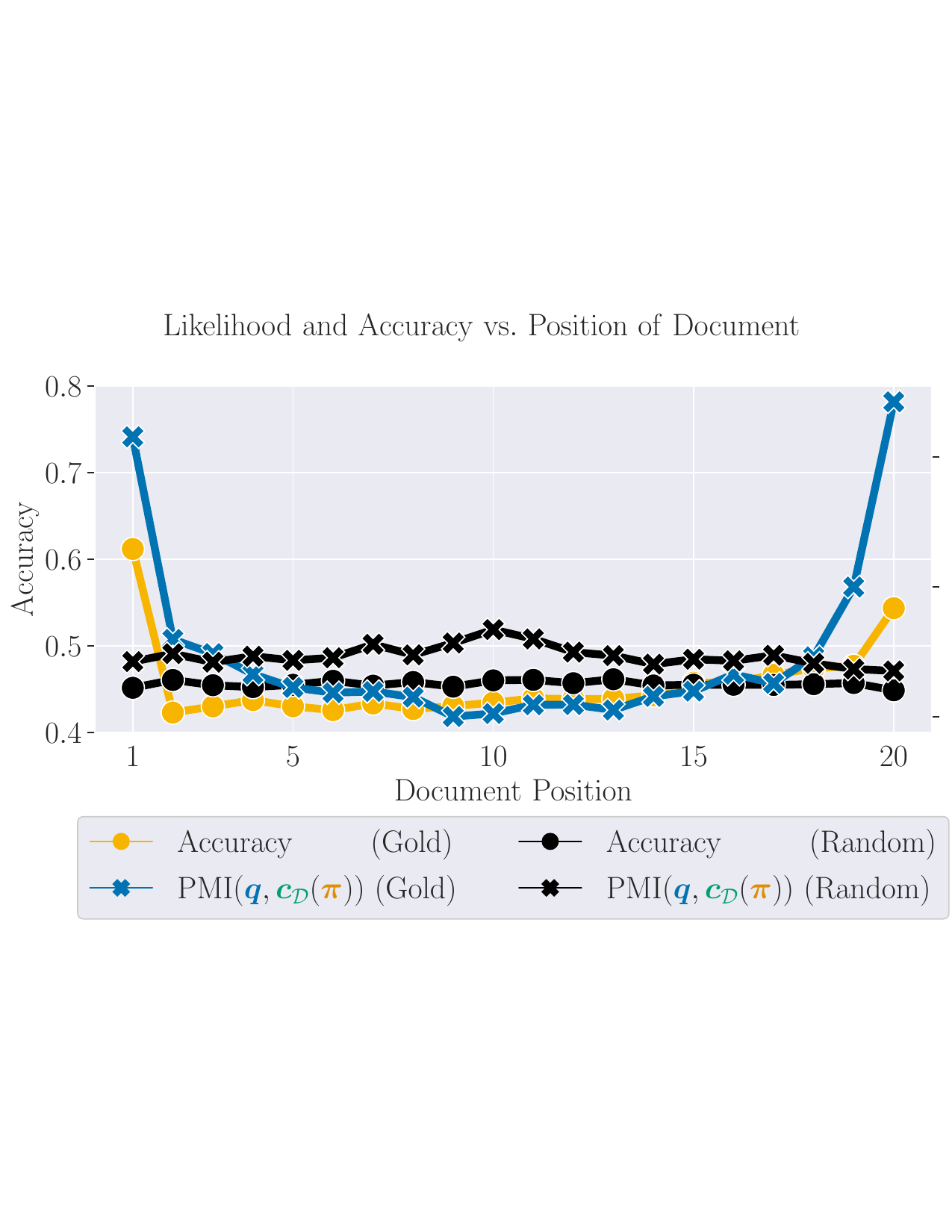}
    \caption{When the position of the gold document changes, both $\pmi(\question, \context)$ and accuracy curves are U-shaped. In contrast, both curves are flat for non-gold (denoted by random) documents.}
    \label{fig:gold-vs-rand}
    \vspace{0pt}
\end{figure}

\paragraph{A Simple Algorithm.}
The abstract discussion in the previous paragraph suggests a simple algorithm. 
First, we construct a real-valued sequence
\begin{equation}
    b_{\tauperm(k)} \defequals \pmi(\question, \context_{\docset}(\permtilde_{k}))
\end{equation}
of length $K$ where $\tauperm \colon [K] \rightarrow [K]$ is a permutation
and $\pmi(\question, \context_{\docset}(\permtilde_{k}))$ is defined in \Cref{sec:sets-of-permutation}.
Then, relaxing the requirement that the permutation be convex, we optimize
\begin{equation}\label{eq:true-method-two}
\tauperm^\star = \argmax_{\tauperm \in (\pi)} b_{\tauperm(1)} + b_{\tauperm(K)}
\end{equation}
While, in general, $\tauperm^\star$ may not be convex, we do have a guarantee that $\tauperm^\star$ will induce a U-shaped sequence.
To compute the optimization problem given in \Cref{eq:true-method-two}, we sort $\tauperm \in (\perm)$ according to $b_{\tauperm(1)} + b_{\tauperm(K)}$ in descending order, obtaining the sequence $\{\tauperm_k\}_{k=1}^{K}$. %
Then, we construct the resulting permutation $\tauperm' = (\tauperm_1(1), \tauperm_2(1), \cdots, \tauperm_K(1))$, among which $\documentd_{\tau(1)}$ is most likely to be the gold document. %

\subsection{Results and Analysis} \label{sec:results-analysis}
Shown in \cref{tab:likelihood-tuning},
both search by PMI and search by curvature can boost answer accuracy.
On NQ-Open, where only one document in the sequence is relevant to the question, gold document reordering significantly improves the answer accuracy and narrows the gap to the upper bound.
Furthermore, on the more challenging and practical QA benchmark ELI5, we also observe a modest improvement in answer accuracy, indicating that improving question likelihoods via document reordering can effectively obtain better LM responses. \looseness=-1

Regarding efficiency, our proposed methods are mildly time-dependent thanks to the parallelizable computation of question likelihoods, where only the LM encoding module is used, with no reliance on LM decoding.\footnote{LM decoding (i.e., generation) requires a runtime approximately proportional to the number of generated tokens. Empirically, the only extra computational time for our methods is on the encoding phase for calculating likelihoods, so the overall runtime is the vanilla ${\rm Runtime}_{\rm LM}$ plus \textit{one} extra LM going through, which is equivalent to generating the response with \textit{one} additional token.}
Shown in \cref{tab:time_efficiency}, in our experiments, the average runtime for decoding a response of an instance in ELI5 is 10 seconds, while it only takes an extra 0.8 seconds and 2 seconds, respectively, to encode the input prompts of naïve likelihood-based selection and gold document reordering. The increment in timely cost is marginal compared with heuristic prompt engineering which requires whole decoding to judge the prompt quality (e.g. an extra 10 seconds for decoding another candidate prompt). \looseness=-1

In summary, both proposed methods are effective and efficient.
Although the improvement on ELI5 is relatively marginal compared to that on NQ-Open, given the more challenging nature of long answers and no specified gold document on ELI5, it still indicates that optimizing prompts with $\lmprefix(\question \mid \context)$ as a gauge is a promising direction. \looseness=-1

\paragraph{Experimental Setup.}
We experiment on the ELI5 dataset and a subset of 500 questions from NQ-Open, using \modelname{Mistral-7B-Inst-v0.3}, \modelname{LLaMA-3.1-8B}, and \modelname{LLaMA-3.1-8B-Inst}. 
Each question is associated with 10 (on ELI5) and 20 (NQ-Open) retrieved documents.

\begin{table}[t!]
\centering \small
\resizebox{\columnwidth}{!}{%
\begin{tabular}{lccc|c} 
\toprule
\multirow{2}{*}{Model} & \multirow{2}{*}{Baseline} & \multirow{2}{*}{PMI} & \multirow{2}{*}{Curvature} & Upper \\
 & &  &  & Bound \\
\midrule \\[-11.5pt]
\rowcolor{black!10} \multicolumn{5}{c}{NQ-Open (Answer Accuracy)} \\[-1.3pt]
\midrule
{\modelname{Mistral}} & 62.89   & 65.18  & \textbf{65.72} & 69.24 \\
{\modelname{LLaMA-3.1}}  & 47.74 & 51.29 & \textbf{51.36} & 66.88 \\ 
{\modelname{LLaMA-3.1-Inst}}  & 61.49  & 63.34  & \textbf{63.56} & 66.35  \\
\midrule \\[-11.5pt]
\rowcolor{black!10} \multicolumn{5}{c}{ELI5 (Answer Accuracy)} \\[-1.3pt]
\midrule
{\modelname{Mistral}} & 15.35 & \textbf{15.63} & 15.40 & - \\
{\modelname{LLaMA-3.1}} & 12.61 & 12.73 & \textbf{13.33} & - \\
{\modelname{LLaMA-3.1-Inst}} & 16.14 & \textbf{16.90} & 16.83 & - \\
\bottomrule
\end{tabular}
}
\caption{Performance of our methods on NQ-Open and ELI5, the number of documents $K$ is set to 20 and 10, respectively.
\modelname{Mistral}, \modelname{LLaMA} and \modelname{LLaMA-Inst} stands for \modelname{Mistral-7B-Inst-v0.3}, \modelname{LLaMA-3.1-8B} and \modelname{LLaMA-3.1-8B-Inst} respectively. 
Baseline refers to the mean performance over $K$ random document shuffling on each instance. The upper bound on NQ-Open is calculated as the performance when positioning the gold document at the beginning of the document sequence, which is not applicable for ELI5 since no gold document is marked in this practical dataset. \looseness=-1} \label{tab:likelihood-tuning}
\vspace{-5pt}
\end{table}

\subsection{Synthetic Experiment}
Real-world datasets might have been used during the training of LLMs.
Thus, their likelihoods might exhibit an \emph{exposure bias} \cite{bengio-2015,ranzato-2016,cotterell2024formalaspectslanguagemodeling}.
To avoid such potential bias, we follow \citet{liu-etal-2024-lost} and conduct a synthetic key--value retrieval experiment.

\begin{table}
\centering \small
\begin{tabular}{c|cc}
\toprule
Decoding & Likelihood Based  & Gold Document \\
\midrule
10s & 0.8s & 2s \\ 
\bottomrule
\end{tabular}
\caption{The average runtime for decoding an LLM response v.s. the extra time for the two proposed methods.}
\label{tab:time_efficiency}
\vspace{-5pt}
\end{table}

\paragraph{Key--Value Retrieval.}
To imitate question-answering tasks on random strings, we construct Python-style key--value pairs in which the keys and values are UUID strings of 32 hexadecimal digits.
An example is given in \cref{fig:kv-retrieval-example}.
In \crefrange{tab:llama-3.1-8b-inst-kv}{tab:mistral-7b-inst-kv}, we observe that both $\pmi(\question, \context)$ and $\lmprefix( \answer \mid \context \bcdot \question)$ show synchronous U-shaped patterns as the location of the key in context changes, consistent with the RAG-based QA experiments in \cref{sec:results}, indicating the generalizability of the findings on unseen data.\looseness=-1

\section{Discussion} 
\label{sec:discussion}
\subsection{Instruction-tuned vs. Base Models}
In our analysis, we find base LMs, e.g., \modelname{LLaMA-3-8B}, tend to be more sensitive to the permutation of the documents.
Specifically, we observe that QA performance drops when the gold document is placed in the middle of the document sequence.
On the other hand, the performance of instruction-tuned models is more robust to permutations of the documents in the context, as shown in \cref{fig:20docs-llama-3.1-all}. 
However, we still do observe the existence U-shaped curve, but the drop in QA performance is less significant for the instruction-tuned model when the gold document is positioned at the middle.\looseness=-1

The fact that PMI serves as a useful gauge for both the base and instruction-tuned models suggests that $\pmi(\question, \contextDpi)$ is affected little by the instruction tuning.

\subsection{When Context is placed after Question}
In our experiments, we only explore the correlation between PMI and accuracy when the question follows the context. 
However, one could also use a prompt template in which the context follows the question.
We remark that in this case, PMI can be computed according to the equation 
\begin{equation}
    \pmi(\question, \context) = \log \frac{\lmprefix(\context \mid \question)}{\lmprefix(\context)}.   
\end{equation}

\section{Conclusion}
In our study, we analyzed the relationship between the PMI between question and context $\pmi(\question, \contextDpi)$ and question-answering performance under the retrieval-augmented generation framework. 
Through experimentation, we demonstrated that $\pmi(\question, \contextDpi)$ is affected by the order of documents in the input context.
We find evidence for a positive correlation between question likelihood and answer accuracy at both the corpus level and instance level.
Our findings show that it is possible to use $\pmi(\question, \contextDpi)$ to gauge language model performance and improve the quality of input prompts. 
We propose two practical methods for prompt optimization based on these findings. 
Experimental results show that both effectively and efficiently improve LM's accuracy on QA tasks, demonstrating that using PMI as a gauge for optimizing prompts is a promising direction.

\section*{Limitations}
One major limitation of our work is that only open-source LMs are studied in this work since we need full access probabilities under the LM.
Thus, closed language models such as \modelname{GPT-4} cannot be used for selecting permutations 

Besides, our prompt modification is limited to document permutation in this work.
Other prompt modifications may also contribute to obtaining a higher $\pmi(\question, \context)$ and improve QA performance. 
Considering that in this work we are taking the first step towards exploring the feasibility of prompt optimization without LM decoding, proving our hypothesis, and managing to optimize prompts with our findings, we leave other prompt optimizations for future study.

\bibliography{custom,anthology}

\begin{thebibliography}{38}
\providecommand{\natexlab}[1]{#1}

\bibitem[{Asai et~al.(2021)Asai, Yu, Kasai, and Hajishirzi}]{NEURIPS2021_3df07fda}
Akari Asai, Xinyan Yu, Jungo Kasai, and Hanna Hajishirzi. 2021.
\newblock \href {https://proceedings.neurips.cc/paper_files/paper/2021/file/3df07fdae1ab273a967aaa1d355b8bb6-Paper.pdf} {One question answering model for many languages with cross-lingual dense passage retrieval}.
\newblock In \emph{Advances in Neural Information Processing Systems}, volume~34, pages 7547--7560. Curran Associates, Inc.

\bibitem[{Bengio et~al.(2015)Bengio, Vinyals, Jaitly, and Shazeer}]{bengio-2015}
Samy Bengio, Oriol Vinyals, Navdeep Jaitly, and Noam Shazeer. 2015.
\newblock \href {https://proceedings.neurips.cc/paper_files/paper/2015/file/e995f98d56967d946471af29d7bf99f1-Paper.pdf} {Scheduled sampling for sequence prediction with recurrent neural networks}.
\newblock In \emph{Proceedings of the 28th International Conference on Neural Information Processing Systems - Volume 1}, NIPS'15, page 1171–1179, Cambridge, MA, USA. MIT Press.

\bibitem[{Borgeaud et~al.(2022)Borgeaud, Mensch, Hoffmann, Cai, Rutherford, Millican, Van Den~Driessche, Lespiau, Damoc, Clark et~al.}]{borgeaud2022improving}
Sebastian Borgeaud, Arthur Mensch, Jordan Hoffmann, Trevor Cai, Eliza Rutherford, Katie Millican, George Van Den~Driessche, Jean-Baptiste Lespiau, Bogdan Damoc, Aidan Clark, et~al. 2022.
\newblock \href {https://proceedings.mlr.press/v162/borgeaud22a/borgeaud22a.pdf} {Improving language models by retrieving from trillions of tokens}.
\newblock In \emph{International conference on machine learning}, pages 2206--2240. PMLR.

\bibitem[{Boutilier et~al.(1996)Boutilier, Friedman, Goldszmidt, and Koller}]{boutilier+al.1996}
Craig Boutilier, Nir Friedman, Moises Goldszmidt, and Daphne Koller. 1996.
\newblock Context-specific independence in {B}ayesian networks.
\newblock In \emph{Proceedings of the Twelfth International Conference on Uncertainty in Artificial Intelligence}, page 115–123.

\bibitem[{Cotterell et~al.(2024)Cotterell, Svete, Meister, Liu, and Du}]{cotterell2024formalaspectslanguagemodeling}
Ryan Cotterell, Anej Svete, Clara Meister, Tianyu Liu, and Li~Du. 2024.
\newblock \href {https://arxiv.org/abs/2311.04329} {Formal aspects of language modeling}.
\newblock \emph{Preprint}, arXiv:2311.04329.

\bibitem[{Ekin(2023)}]{ekin2023prompt}
Sabit Ekin. 2023.
\newblock \href {https://www.techrxiv.org/users/690417/articles/681648-prompt-engineering-for-chatgpt-a-quick-guide-to-techniques-tips-and-best-practices} {Prompt engineering for chatgpt: a quick guide to techniques, tips, and best practices}.
\newblock \emph{Authorea Preprints}.

\bibitem[{Fan et~al.(2019)Fan, Jernite, Perez, Grangier, Weston, and Auli}]{fan-etal-2019-eli5}
Angela Fan, Yacine Jernite, Ethan Perez, David Grangier, Jason Weston, and Michael Auli. 2019.
\newblock \href {https://doi.org/10.18653/v1/P19-1346} {{ELI}5: Long form question answering}.
\newblock In \emph{Proceedings of the 57th Annual Meeting of the Association for Computational Linguistics}, pages 3558--3567, Florence, Italy. Association for Computational Linguistics.

\bibitem[{Gao et~al.(2021)Gao, Fisch, and Chen}]{gao-etal-2021-making}
Tianyu Gao, Adam Fisch, and Danqi Chen. 2021.
\newblock \href {https://doi.org/10.18653/v1/2021.acl-long.295} {Making pre-trained language models better few-shot learners}.
\newblock In \emph{Proceedings of the 59th Annual Meeting of the Association for Computational Linguistics and the 11th International Joint Conference on Natural Language Processing (Volume 1: Long Papers)}, pages 3816--3830, Online. Association for Computational Linguistics.

\bibitem[{Gao et~al.(2023)Gao, Yen, Yu, and Chen}]{gao-etal-2023-enabling}
Tianyu Gao, Howard Yen, Jiatong Yu, and Danqi Chen. 2023.
\newblock \href {https://doi.org/10.18653/v1/2023.emnlp-main.398} {Enabling large language models to generate text with citations}.
\newblock In \emph{Proceedings of the 2023 Conference on Empirical Methods in Natural Language Processing}, pages 6465--6488, Singapore. Association for Computational Linguistics.

\bibitem[{Giray(2023)}]{giray2023prompt}
Louie Giray. 2023.
\newblock \href {https://link.springer.com/article/10.1007/s10439-023-03272-4} {Prompt engineering with chatgpt: a guide for academic writers}.
\newblock \emph{Annals of biomedical engineering}, 51(12):2629--2633.

\bibitem[{Gonen et~al.(2023)Gonen, Iyer, Blevins, Smith, and Zettlemoyer}]{gonen-etal-2023-demystifying}
Hila Gonen, Srini Iyer, Terra Blevins, Noah Smith, and Luke Zettlemoyer. 2023.
\newblock \href {https://doi.org/10.18653/v1/2023.findings-emnlp.679} {Demystifying prompts in language models via perplexity estimation}.
\newblock In \emph{Findings of the Association for Computational Linguistics: EMNLP 2023}, pages 10136--10148, Singapore. Association for Computational Linguistics.

\bibitem[{Izacard et~al.(2022)Izacard, Caron, Hosseini, Riedel, Bojanowski, Joulin, and Grave}]{izacard2022unsuperviseddenseinformationretrieval}
Gautier Izacard, Mathilde Caron, Lucas Hosseini, Sebastian Riedel, Piotr Bojanowski, Armand Joulin, and Edouard Grave. 2022.
\newblock \href {https://arxiv.org/abs/2112.09118} {Unsupervised dense information retrieval with contrastive learning}.
\newblock \emph{Preprint}, arXiv:2112.09118.

\bibitem[{Izacard and Grave(2021)}]{izacard-grave-2021-leveraging}
Gautier Izacard and Edouard Grave. 2021.
\newblock \href {https://doi.org/10.18653/v1/2021.eacl-main.74} {Leveraging passage retrieval with generative models for open domain question answering}.
\newblock In \emph{Proceedings of the 16th Conference of the European Chapter of the Association for Computational Linguistics: Main Volume}, pages 874--880, Online. Association for Computational Linguistics.

\bibitem[{Izacard et~al.(2024)Izacard, Lewis, Lomeli, Hosseini, Petroni, Schick, Dwivedi-Yu, Joulin, Riedel, and Grave}]{izacard2022atlas}
Gautier Izacard, Patrick Lewis, Maria Lomeli, Lucas Hosseini, Fabio Petroni, Timo Schick, Jane Dwivedi-Yu, Armand Joulin, Sebastian Riedel, and Edouard Grave. 2024.
\newblock \href {https://dl.acm.org/doi/10.5555/3648699.3648950} {Atlas: Few-shot learning with retrieval augmented language models}.
\newblock \emph{J. Mach. Learn. Res.}, 24(1).

\bibitem[{Jiang et~al.(2023)Jiang, Sablayrolles, Mensch, Bamford, Chaplot, de~las Casas, Bressand, Lengyel, Lample, Saulnier, Lavaud, Lachaux, Stock, Scao, Lavril, Wang, Lacroix, and Sayed}]{jiang2023mistral7b}
Albert~Q. Jiang, Alexandre Sablayrolles, Arthur Mensch, Chris Bamford, Devendra~Singh Chaplot, Diego de~las Casas, Florian Bressand, Gianna Lengyel, Guillaume Lample, Lucile Saulnier, Lélio~Renard Lavaud, Marie-Anne Lachaux, Pierre Stock, Teven~Le Scao, Thibaut Lavril, Thomas Wang, Timothée Lacroix, and William~El Sayed. 2023.
\newblock \href {https://arxiv.org/abs/2310.06825} {Mistral {7B}}.
\newblock \emph{Preprint}, arXiv:2310.06825.

\bibitem[{Karpukhin et~al.(2020)Karpukhin, Oguz, Min, Lewis, Wu, Edunov, Chen, and Yih}]{karpukhin-etal-2020-dense}
Vladimir Karpukhin, Barlas Oguz, Sewon Min, Patrick Lewis, Ledell Wu, Sergey Edunov, Danqi Chen, and Wen-tau Yih. 2020.
\newblock \href {https://doi.org/10.18653/v1/2020.emnlp-main.550} {Dense passage retrieval for open-domain question answering}.
\newblock In \emph{Proceedings of the 2020 Conference on Empirical Methods in Natural Language Processing (EMNLP)}, pages 6769--6781, Online. Association for Computational Linguistics.

\bibitem[{Kwiatkowski et~al.(2019)Kwiatkowski, Palomaki, Redfield, Collins, Parikh, Alberti, Epstein, Polosukhin, Devlin, Lee, Toutanova, Jones, Kelcey, Chang, Dai, Uszkoreit, Le, and Petrov}]{kwiatkowski-etal-2019-natural}
Tom Kwiatkowski, Jennimaria Palomaki, Olivia Redfield, Michael Collins, Ankur Parikh, Chris Alberti, Danielle Epstein, Illia Polosukhin, Jacob Devlin, Kenton Lee, Kristina Toutanova, Llion Jones, Matthew Kelcey, Ming-Wei Chang, Andrew~M. Dai, Jakob Uszkoreit, Quoc Le, and Slav Petrov. 2019.
\newblock \href {https://doi.org/10.1162/tacl_a_00276} {Natural questions: A benchmark for question answering research}.
\newblock \emph{Transactions of the Association for Computational Linguistics}, 7:452--466.

\bibitem[{Lee et~al.(2019)Lee, Chang, and Toutanova}]{lee-etal-2019-latent}
Kenton Lee, Ming-Wei Chang, and Kristina Toutanova. 2019.
\newblock \href {https://doi.org/10.18653/v1/P19-1612} {Latent retrieval for weakly supervised open domain question answering}.
\newblock In \emph{Proceedings of the 57th Annual Meeting of the Association for Computational Linguistics}, pages 6086--6096, Florence, Italy. Association for Computational Linguistics.

\bibitem[{Lewis et~al.(2020)Lewis, Perez, Piktus, Petroni, Karpukhin, Goyal, K\"{u}ttler, Lewis, Yih, Rockt\"{a}schel, Riedel, and Kiela}]{rag-lewis-2020}
Patrick Lewis, Ethan Perez, Aleksandra Piktus, Fabio Petroni, Vladimir Karpukhin, Naman Goyal, Heinrich K\"{u}ttler, Mike Lewis, Wen-tau Yih, Tim Rockt\"{a}schel, Sebastian Riedel, and Douwe Kiela. 2020.
\newblock \href {https://dl.acm.org/doi/pdf/10.5555/3495724.3496517} {Retrieval-augmented generation for knowledge-intensive nlp tasks}.
\newblock In \emph{Proceedings of the 34th International Conference on Neural Information Processing Systems}, NIPS '20, Red Hook, NY, USA. Curran Associates Inc.

\bibitem[{Liu et~al.(2023)Liu, Zhang, and Liang}]{liu-etal-2023-evaluating}
Nelson Liu, Tianyi Zhang, and Percy Liang. 2023.
\newblock \href {https://doi.org/10.18653/v1/2023.findings-emnlp.467} {Evaluating verifiability in generative search engines}.
\newblock In \emph{Findings of the Association for Computational Linguistics: EMNLP 2023}, pages 7001--7025, Singapore. Association for Computational Linguistics.

\bibitem[{Liu et~al.(2024)Liu, Lin, Hewitt, Paranjape, Bevilacqua, Petroni, and Liang}]{liu-etal-2024-lost}
Nelson~F. Liu, Kevin Lin, John Hewitt, Ashwin Paranjape, Michele Bevilacqua, Fabio Petroni, and Percy Liang. 2024.
\newblock \href {https://doi.org/10.1162/tacl_a_00638} {Lost in the middle: How language models use long contexts}.
\newblock \emph{Transactions of the Association for Computational Linguistics}, 12:157--173.

\bibitem[{Ma et~al.(2024)Ma, Xu, and Tan}]{ma2024crafting}
Lijia Ma, Xingchen Xu, and Yong Tan. 2024.
\newblock \href {https://arxiv.org/pdf/2402.19421} {Crafting knowledge: Exploring the creative mechanisms of chat-based search engines}.
\newblock \emph{arXiv preprint arXiv:2402.19421}.

\bibitem[{Marvin et~al.(2023)Marvin, Hellen, Jjingo, and Nakatumba-Nabende}]{marvin2023prompt}
Ggaliwango Marvin, Nakayiza Hellen, Daudi Jjingo, and Joyce Nakatumba-Nabende. 2023.
\newblock \href {https://arxiv.org/pdf/2402.07927} {Prompt engineering in large language models}.
\newblock In \emph{International conference on data intelligence and cognitive informatics}, pages 387--402. Springer.

\bibitem[{Menick et~al.(2022)Menick, Trebacz, Mikulik, Aslanides, Song, Chadwick, Glaese, Young, Campbell-Gillingham, Irving et~al.}]{menick2022teaching}
Jacob Menick, Maja Trebacz, Vladimir Mikulik, John Aslanides, Francis Song, Martin Chadwick, Mia Glaese, Susannah Young, Lucy Campbell-Gillingham, Geoffrey Irving, et~al. 2022.
\newblock \href {https://arxiv.org/pdf/2203.11147} {Teaching language models to support answers with verified quotes}.
\newblock \emph{arXiv preprint arXiv:2203.11147}.

\bibitem[{Nakano et~al.(2021)Nakano, Hilton, Balaji, Wu, Ouyang, Kim, Hesse, Jain, Kosaraju, Saunders et~al.}]{nakano2021webgpt}
Reiichiro Nakano, Jacob Hilton, Suchir Balaji, Jeff Wu, Long Ouyang, Christina Kim, Christopher Hesse, Shantanu Jain, Vineet Kosaraju, William Saunders, et~al. 2021.
\newblock \href {https://arxiv.org/pdf/2112.09332} {Webgpt: Browser-assisted question-answering with human feedback}.
\newblock \emph{arXiv preprint arXiv:2112.09332}.

\bibitem[{Ni et~al.(2022)Ni, Qu, Lu, Dai, Hernandez~Abrego, Ma, Zhao, Luan, Hall, Chang, and Yang}]{ni-etal-2022-large}
Jianmo Ni, Chen Qu, Jing Lu, Zhuyun Dai, Gustavo Hernandez~Abrego, Ji~Ma, Vincent Zhao, Yi~Luan, Keith Hall, Ming-Wei Chang, and Yinfei Yang. 2022.
\newblock \href {https://doi.org/10.18653/v1/2022.emnlp-main.669} {Large dual encoders are generalizable retrievers}.
\newblock In \emph{Proceedings of the 2022 Conference on Empirical Methods in Natural Language Processing}, pages 9844--9855, Abu Dhabi, United Arab Emirates. Association for Computational Linguistics.

\bibitem[{Petroni et~al.(2020)Petroni, Lewis, Piktus, Rockt{\"a}schel, Wu, Miller, and Riedel}]{petroni-etal-2020-how}
Fabio Petroni, Patrick Lewis, Aleksandra Piktus, Tim Rockt{\"a}schel, Yuxiang Wu, Alexander~H. Miller, and Sebastian Riedel. 2020.
\newblock \href {https://openreview.net/forum?id=025X0zPfn} {How context affects language models' factual predictions}.
\newblock In \emph{Automated Knowledge Base Construction}.

\bibitem[{Piktus et~al.(2021)Piktus, Petroni, Karpukhin, Okhonko, Broscheit, Izacard, Lewis, O{\u{g}}uz, Grave, Yih et~al.}]{piktus2021web}
Aleksandra Piktus, Fabio Petroni, Vladimir Karpukhin, Dmytro Okhonko, Samuel Broscheit, Gautier Izacard, Patrick Lewis, Barlas O{\u{g}}uz, Edouard Grave, Wen-tau Yih, et~al. 2021.
\newblock \href {https://arxiv.org/pdf/2112.09924} {The web is your oyster-knowledge-intensive nlp against a very large web corpus}.
\newblock \emph{arXiv preprint arXiv:2112.09924}.

\bibitem[{Pryzant et~al.(2023)Pryzant, Iter, Li, Lee, Zhu, and Zeng}]{pryzant-etal-2023-automatic}
Reid Pryzant, Dan Iter, Jerry Li, Yin Lee, Chenguang Zhu, and Michael Zeng. 2023.
\newblock \href {https://doi.org/10.18653/v1/2023.emnlp-main.494} {Automatic prompt optimization with {``}gradient descent{''} and beam search}.
\newblock In \emph{Proceedings of the 2023 Conference on Empirical Methods in Natural Language Processing}, pages 7957--7968, Singapore. Association for Computational Linguistics.

\bibitem[{Qi et~al.(2024)Qi, Sarti, Fern{\'a}ndez, and Bisazza}]{qi2024model}
Jirui Qi, Gabriele Sarti, Raquel Fern{\'a}ndez, and Arianna Bisazza. 2024.
\newblock \href {https://arxiv.org/pdf/2406.13663} {Model internals-based answer attribution for trustworthy retrieval-augmented generation}.
\newblock \emph{arXiv preprint arXiv:2406.13663}.

\bibitem[{Ranzato et~al.(2016)Ranzato, Chopra, Auli, and Zaremba}]{ranzato-2016}
Marc'Aurelio Ranzato, Sumit Chopra, Michael Auli, and Wojciech Zaremba. 2016.
\newblock \href {http://arxiv.org/abs/1511.06732} {Sequence level training with recurrent neural networks}.
\newblock In \emph{4th International Conference on Learning Representations, {ICLR} 2016, San Juan, Puerto Rico, May 2-4, 2016, Conference Track Proceedings}.

\bibitem[{Schulhoff et~al.(2024)Schulhoff, Ilie, Balepur, Kahadze, Liu, Si, Li, Gupta, Han, Schulhoff et~al.}]{schulhoff2024prompt}
Sander Schulhoff, Michael Ilie, Nishant Balepur, Konstantine Kahadze, Amanda Liu, Chenglei Si, Yinheng Li, Aayush Gupta, HyoJung Han, Sevien Schulhoff, et~al. 2024.
\newblock \href {https://arxiv.org/pdf/2406.06608} {The prompt report: A systematic survey of prompting techniques}.
\newblock \emph{arXiv preprint arXiv:2406.06608}.

\bibitem[{Touvron et~al.(2023)Touvron, Lavril, Izacard, Martinet, Lachaux, Lacroix, Rozière, Goyal, Hambro, Azhar, Rodriguez, Joulin, Grave, and Lample}]{touvron2023llamaopenefficientfoundation}
Hugo Touvron, Thibaut Lavril, Gautier Izacard, Xavier Martinet, Marie-Anne Lachaux, Timothée Lacroix, Baptiste Rozière, Naman Goyal, Eric Hambro, Faisal Azhar, Aurelien Rodriguez, Armand Joulin, Edouard Grave, and Guillaume Lample. 2023.
\newblock \href {https://arxiv.org/abs/2302.13971} {Llama: Open and efficient foundation language models}.
\newblock \emph{Preprint}, arXiv:2302.13971.

\bibitem[{Vieira et~al.(2024)Vieira, LeBrun, Giulianelli, Gastaldi, DuSell, Terilla, O'Donnell, and Cotterell}]{vieira2024languagemodelstokenslanguage}
Tim Vieira, Ben LeBrun, Mario Giulianelli, Juan~Luis Gastaldi, Brian DuSell, John Terilla, Timothy~J. O'Donnell, and Ryan Cotterell. 2024.
\newblock \href {https://arxiv.org/abs/2412.03719} {From language models over tokens to language models over characters}.
\newblock \emph{Preprint}, arXiv:2412.03719.

\bibitem[{Wei et~al.(2023)Wei, Wang, Schuurmans, Bosma, Ichter, Xia, Chi, Le, and Zhou}]{wei2023chainofthoughtpromptingelicitsreasoning}
Jason Wei, Xuezhi Wang, Dale Schuurmans, Maarten Bosma, Brian Ichter, Fei Xia, Ed~Chi, Quoc Le, and Denny Zhou. 2023.
\newblock \href {https://arxiv.org/abs/2201.11903} {Chain-of-thought prompting elicits reasoning in large language models}.
\newblock \emph{Preprint}, arXiv:2201.11903.

\bibitem[{Yao et~al.(2023)Yao, Yu, Zhao, Shafran, Griffiths, Cao, and Narasimhan}]{yao2023treethoughtsdeliberateproblem}
Shunyu Yao, Dian Yu, Jeffrey Zhao, Izhak Shafran, Thomas~L. Griffiths, Yuan Cao, and Karthik Narasimhan. 2023.
\newblock \href {https://arxiv.org/abs/2305.10601} {Tree of thoughts: Deliberate problem solving with large language models}.
\newblock \emph{Preprint}, arXiv:2305.10601.

\bibitem[{Zhao et~al.(2023)Zhao, Khalman, Joshi, Narayan, Saleh, and Liu}]{zhao2023calibrating}
Yao Zhao, Mikhail Khalman, Rishabh Joshi, Shashi Narayan, Mohammad Saleh, and Peter~J Liu. 2023.
\newblock \href {https://openreview.net/forum?id=0qSOodKmJaN} {Calibrating sequence likelihood improves conditional language generation}.
\newblock In \emph{The Eleventh International Conference on Learning Representations}.

\bibitem[{Zhou et~al.(2023)Zhou, Muresanu, Han, Paster, Pitis, Chan, and Ba}]{zhou2023large}
Yongchao Zhou, Andrei~Ioan Muresanu, Ziwen Han, Keiran Paster, Silviu Pitis, Harris Chan, and Jimmy Ba. 2023.
\newblock \href {https://openreview.net/forum?id=92gvk82DE-} {Large language models are human-level prompt engineers}.
\newblock In \emph{The Eleventh International Conference on Learning Representations}.

\end{thebibliography}

\clearpage

\onecolumn

\appendix

\section{Related Work}

\subsection{Prompt Engineering}
Prompt engineering is important for making the best use of LMs in real-world applications \citep{giray2023prompt, ekin2023prompt,gonen-etal-2023-demystifying}.
The most straightforward prompt engineering method is to manually design prompts using heuristics, which requires human experts to design prompts based on domain-specific knowledge and select the prompts that lead to better performance on downstream tasks \citep{zhou2023large, marvin2023prompt}.
Meanwhile, another line of work explores automatic approaches for prompts engineering \citep{gao-etal-2021-making, pryzant-etal-2023-automatic}.
However, they both require decoding for outputs from LMs to evaluate the quality of prompts, thus incurring high computational costs. \looseness=-1

\subsection{Retrieval-Augmented Generation}
Retrieval-augmented generation is a  technique for improving LMs' ability to solve knowledge-intensive tasks \citep{rag-lewis-2020, NEURIPS2021_3df07fda, borgeaud2022improving}.
In the RAG framework, a set of documents relevant to a user query is retrieved from an external source and inserted into prompts as a $\textcontext$, to provide additional information to the LM and improve response quality \citep{petroni-etal-2020-how,rag-lewis-2020}. 
RAG tasks can be divided into two types: short-form and long-form, depending on the topic of the questions and the format of the expected answers. 
Short-form QA \citep{izacard-grave-2021-leveraging, liu-etal-2024-lost} usually concerns factual questions about real-world facts. 
The expected answers are often unambiguous and concrete words or short phrases. 
Long-form QA \citep{fan-etal-2019-eli5, gao-etal-2023-enabling} involves \textit{how}, \textit{why}, and \textit{what} questions that seek more comprehensive responses. \looseness=-1

\subsection{Effect of Document Order}

\citet{liu-etal-2024-lost} finds that LMs perform better when the document with relevant information is positioned at the beginning or the end of the prompt using under RAG framework.\footnote{In \citeposs{liu-etal-2024-lost} experimental settings, the gold document is unique in a prompt for each question.} 
Specifically, when moving the task-relevant information from the beginning to the end of the document sequence, answer accuracy exhibits a U-shaped trend on a multi-document QA task and a synthetic key--value retrieval task, both using RAG pipelines. 
However, \citet{liu-etal-2024-lost} mainly focuses on an empirical study with less in-depth analysis, resulting in a gap between the phenomenon and its practical implications. 
In this work, we attempt to bridge this gap.\looseness=-1

\section{Illustration of Evaluation Metrics}
\label{app:illustration_evaluation_metric}
The evaluation metrics for NQ-Open and ELI5 are illustrated with two examples in \cref{fig:evaluation-illustration}.

\begin{figure}
    \centering
    \includegraphics[width=1.0\linewidth]{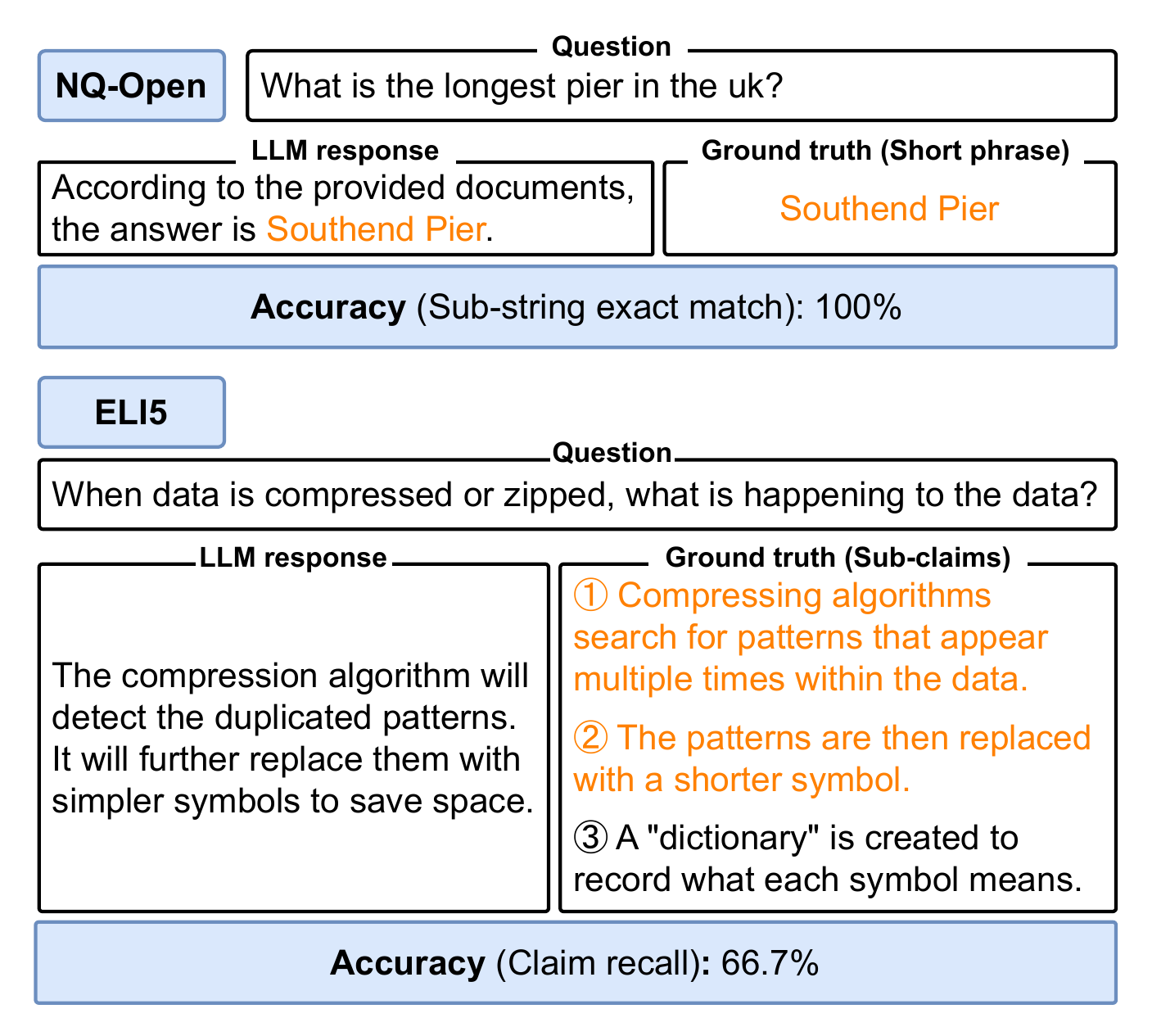}
    \caption{Evaluation metrics used in our experiments. On NQ-Open, the evaluation metric is exact string match. On ELI5, a pretrained NLI model is used to evaluate whether the LM output entails the reference claims.}
    \label{fig:evaluation-illustration}
\end{figure}

\section{Datasets}\label{app:datasets}

\paragraph{NQ-Open.}
We first experiment on the NQ-Open dataset following \citet{liu-etal-2024-lost}. 
This dataset covers 2655 factual questions curated from the Natural Questions dataset \citep{kwiatkowski-etal-2019-natural,lee-etal-2019-latent} under \texttt{CC-BY-SA-3.0} license. 
Each question is accompanied by $K$ documents retrieved from Wikipedia, among which \emph{exactly one} contains the answer to the question, namely the gold document. 
The remaining $k-1$ documents are termed \defn{distractors}, which are relevant to the topic of the question but do not contain any ground truth answers, retrieved using Contriever \cite{izacard2022unsuperviseddenseinformationretrieval}. 
In our experiments, the total number of documents $K$ is taken to be $\{ 10, 20, 30\}$.\footnote{We remark that NQ-Open was specifically synthesized to examine how answer accuracy is affected by changing the position of relevant information.
In real-world applications, the retrieved documents for one question may contain multiple gold documents or none. 
Nevertheless, it mimics the RAG setup underlying many commercial generative search and QA systems. \looseness=-1} \looseness=-1

\paragraph{ELI5.} To validate the generality of our findings, we also experiment on an open-ended non-factual QA dataset ELI5 \citep{fan-etal-2019-eli5} with \texttt{BSD} license. 
ELI5 consists of questions beginning with \textit{how}, \textit{why} or \textit{what} curated from the Reddit forum ``Explain Like I’m Five''\footnote{\url{https://www.reddit.com/r/explainlikeimfive/}}, where the answers are expected to be more comprehensive and diverse. 
Each question is accompanied by $K$ documents retrieved from Sphere \citep{piktus2021web}---a filtered version of Common Crawl\footnote{\url{https://commoncrawl.org}}, 
where $K$ is taken to be $\{5, 10, 20\}$ to avoid truncation due to the long questions and LMs responses for the long-form QA task. 
In contrast to NQ-Open, ELI5 does not provide the annotations of gold documents,
which aligns with real-world RAG application scenarios, making it a more practical and challenging dataset \citep{nakano2021webgpt, menick2022teaching, liu-etal-2023-evaluating}.  \looseness=-1

\section{Prompt Templates} \label{app:templates}

The prompt templates used for our experiments are given in \cref{fig:multidoc-qa-ICQ-nqopen,fig:multidoc-qa-ICQ-nqopen-ELI5,fig:synthetic-kv-retrieval}.

\begin{figure}[!t]
\begin{tcolorbox} \tt \small
Write a high-quality answer for the given question using only the provided search results (some of which might be irrelevant).
\\

{\goldcontextcolor Document [1](Title: Southend Pier) Southend Pier is a major landmark in \textellipsis} \\

{\contextcolor Document [2](Title: Llandudno Pier) Llandudno Pier Llandudno Pier is a Grade II* listed pier in the seaside resort of Llandudno\textellipsis} \\

{\contextcolor Document [3](Title: Garth Pier) Garth Pier Garth Pier is a Grade II listed structure in Bangor\textellipsis} \\

{\hspace*{\fill} \contextcolor \textellipsis \hspace*{\fill}} \\

Question: {\questioncolor what is the longest pier in the uk}
\\

According to the provided documents, the answer is {\answercolor Southend Pier}.
\end{tcolorbox}
\caption{An example prompt and LM output on NQ-Open.
The prompt comprises (1) an instruction that describes the task to be solved, (2) a $\textcontext$ that contains the information for solving the task, in which the {\goldcontextcolor gold document} contains the ground truth answer, and (3) a $\textquestion$ that describes the specific query. At the end of the prompt, we append an exemplar output that gives the ground truth $\textanswer$ to the $\textquestion$ for evaluating the likelihood of the answer. \looseness=-1}\label{fig:multidoc-qa-ICQ-nqopen}
\end{figure} 

\begin{figure}[!t]
\begin{tcolorbox} \tt \small
Instruction: Write an accurate, engaging, and concise answer for the given question using only the provided search results (some of which might be irrelevant). Use an unbiased and journalistic tone.
\\

{\contextcolor Document [1](Title: Trash Islands - the Ocean Garbage Patch): Trash Islands Trash Islands of the Pacific and Atlantic Oceans\textellipsis} \\

{\contextcolor Document [2](Title: Where does our garbage go? - Sea Turtle Camp): Pacific Garbage Patch Landfills are a common human solution for disposing of trash on land\textellipsis} \\

{\contextcolor Document [3](Title: Plastic pollution crisis: How waste ends up in our oceans – Y108): our ecosystems as a whole. Plastic is non-biodegradable. Every year, about 8-million tons of plastic\textellipsis} \\

{\hspace*{\fill} \contextcolor \textellipsis \hspace*{\fill}}
\\

Question: {\questioncolor how does so much of our trash end up in the ocean?}
\\

According to various sources, a significant portion of the world's trash ends up in the ocean due to a combination of factors. While it's often\textellipsis  individuals is necessary to mitigate the problem of plastic pollution in the world. \\
\small{[Answer length: 242 words]}
\end{tcolorbox}
\caption{An example prompt and LM output on ELI5. The prompt comprises (1) an instruction that describes the task to be solved, consistent with previous works on ELI5 \citep{gao-etal-2023-enabling, qi2024model}, (2) a $\textcontext$ that contains the information for solving the task, but \emph{no {\goldcontextcolor gold document}} is marked, and (3) a $\textquestion$ that describes the specific query. At the end of the prompt, we append an exemplar output that gives the ground truth $\textanswer$ to the $\textquestion$ for evaluating the likelihood of the answer. \looseness=-1}\label{fig:multidoc-qa-ICQ-nqopen-ELI5}
\end{figure}

\begin{figure}[!t]
\begin{tcolorbox} \tt \small
\begin{lstlisting}[language=Python,basicstyle=\tiny]
{
    "749d280d-8d74-4a2b-87fa-e2a13b689892": 
        "51f95eb8-1f16-4bbf-a7be-6109e581fc04",
    "6618b34a-08b6-46a8-a438-aedc1a2a4635": 
        "3e93dc61-1e82-46b1-94be-7ef2e63746e5",
    ...
}

Key: "749d280d-8d74-4a2b-87fa-e2a13b689892"
Value:
\end{lstlisting}
\end{tcolorbox}
\caption{An example of synthetic data for key--value retrieval.}\label{fig:synthetic-kv-retrieval}
\end{figure}

\section{Full Results on NQ-Open} \label{app:full-mdqa}

We show the full results on NQ-Open in \Crefrange{fig:10docs-llama-3-all-1}{fig:30docs-llama-3-all-1}.

\begin{figure*}[!ht]
    \centering
    \begin{subfigure}{0.48\textwidth}
        \centering
        \includegraphics[trim={0mm 60mm 0mm 70mm},clip,width=\linewidth]{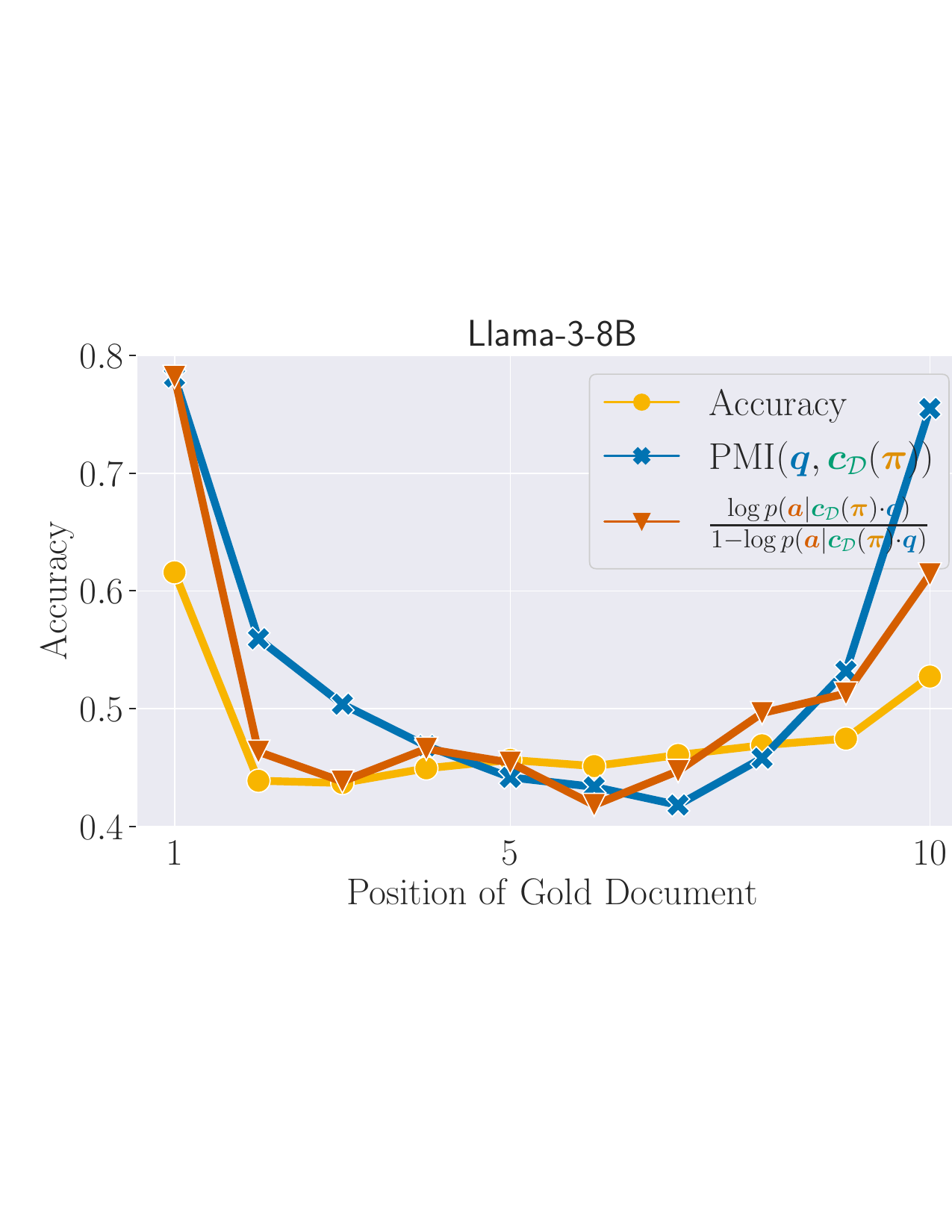}
        \caption{}
    \end{subfigure}
    \begin{subfigure}{0.48\textwidth}
        \centering
        \includegraphics[trim={0mm 60mm 0mm 70mm},clip,width=\linewidth]{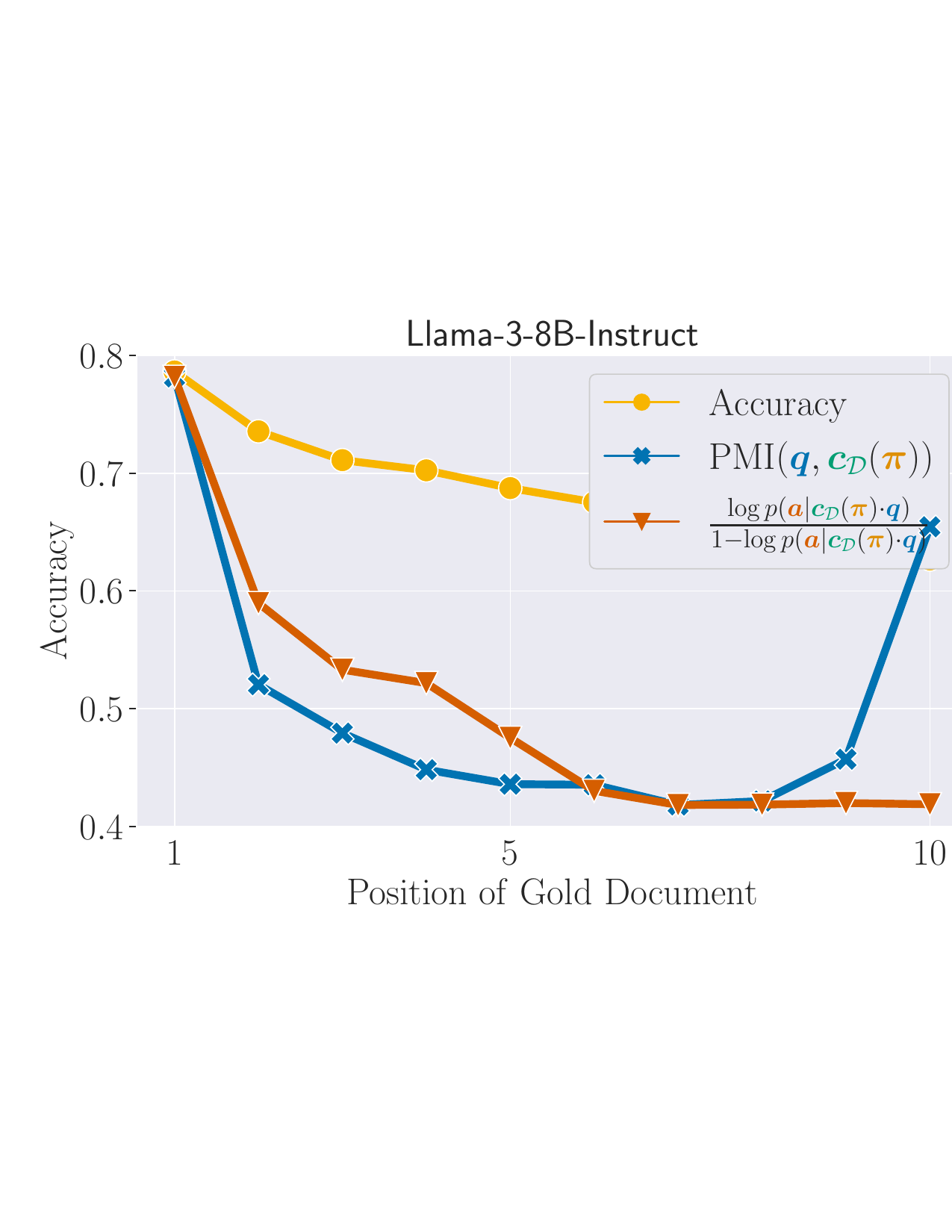}
        \caption{}
    \end{subfigure}
    \begin{subfigure}{0.48\textwidth}
        \centering
        \includegraphics[trim={0mm 60mm 0mm 70mm},clip,width=\linewidth]{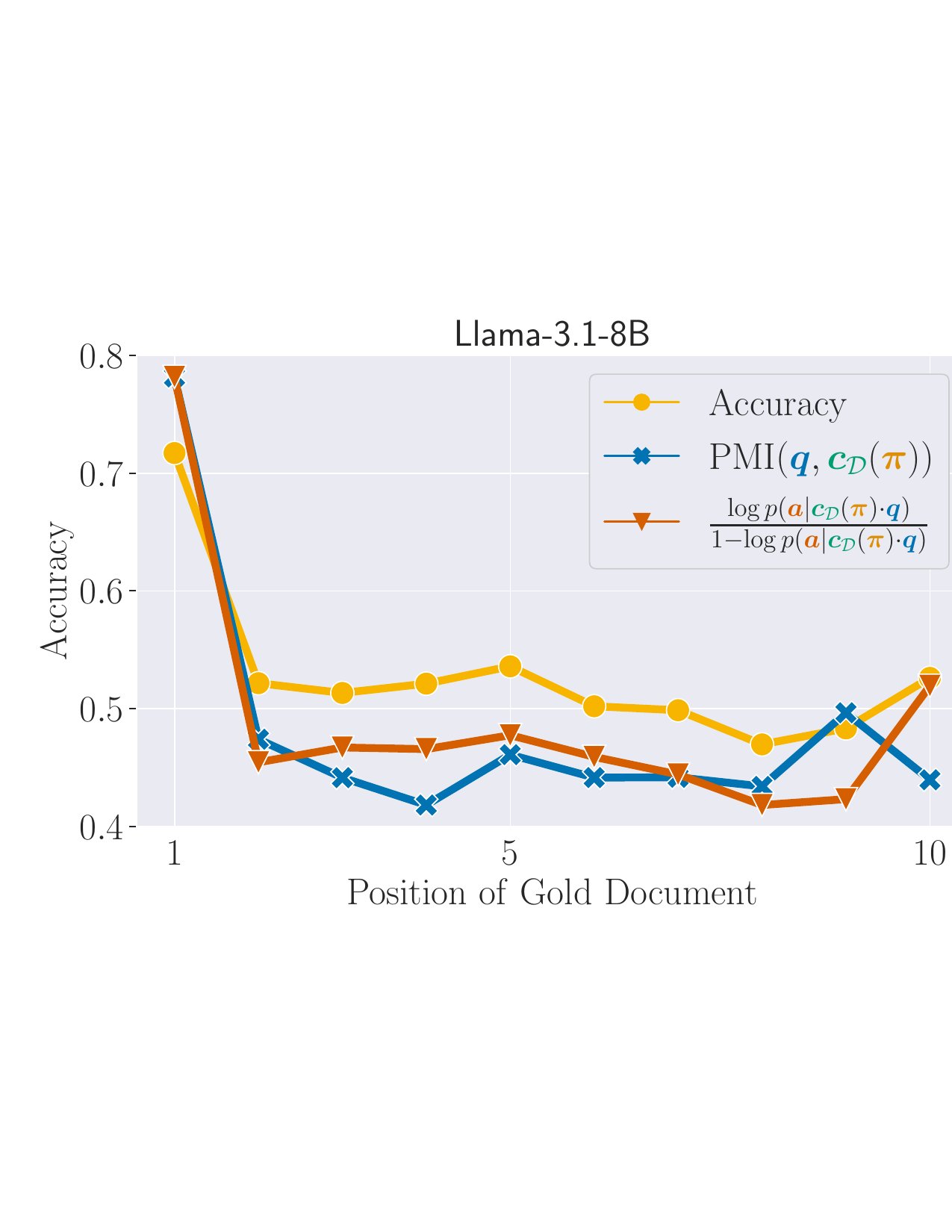}
        \caption{}
    \end{subfigure}
    \begin{subfigure}{0.48\textwidth}
        \centering
        \includegraphics[trim={0mm 60mm 0mm 70mm},clip,width=\linewidth]{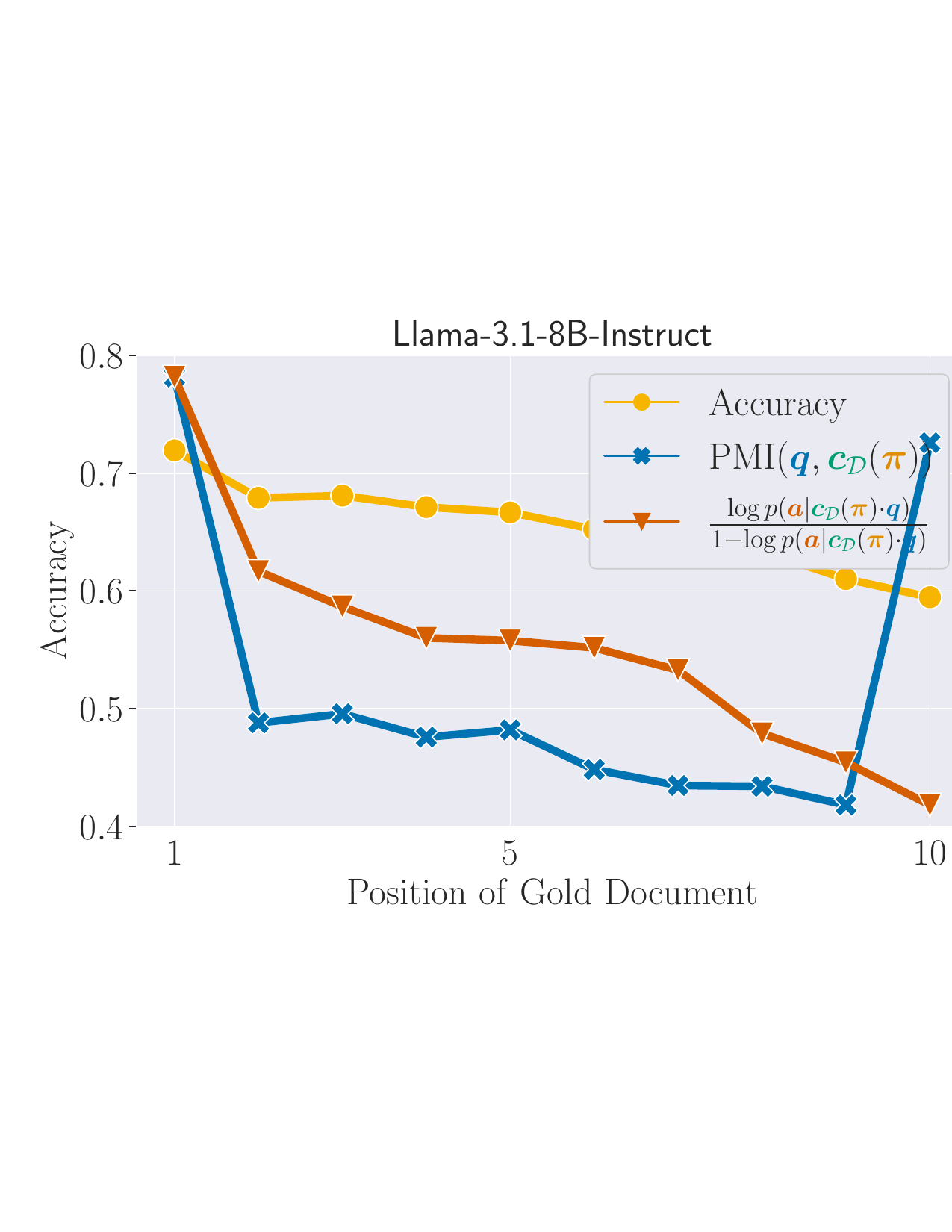}
        \caption{}
    \end{subfigure}
    \begin{subfigure}{0.48\textwidth}
        \centering
        \includegraphics[trim={0mm 60mm 0mm 70mm},clip,width=\linewidth]{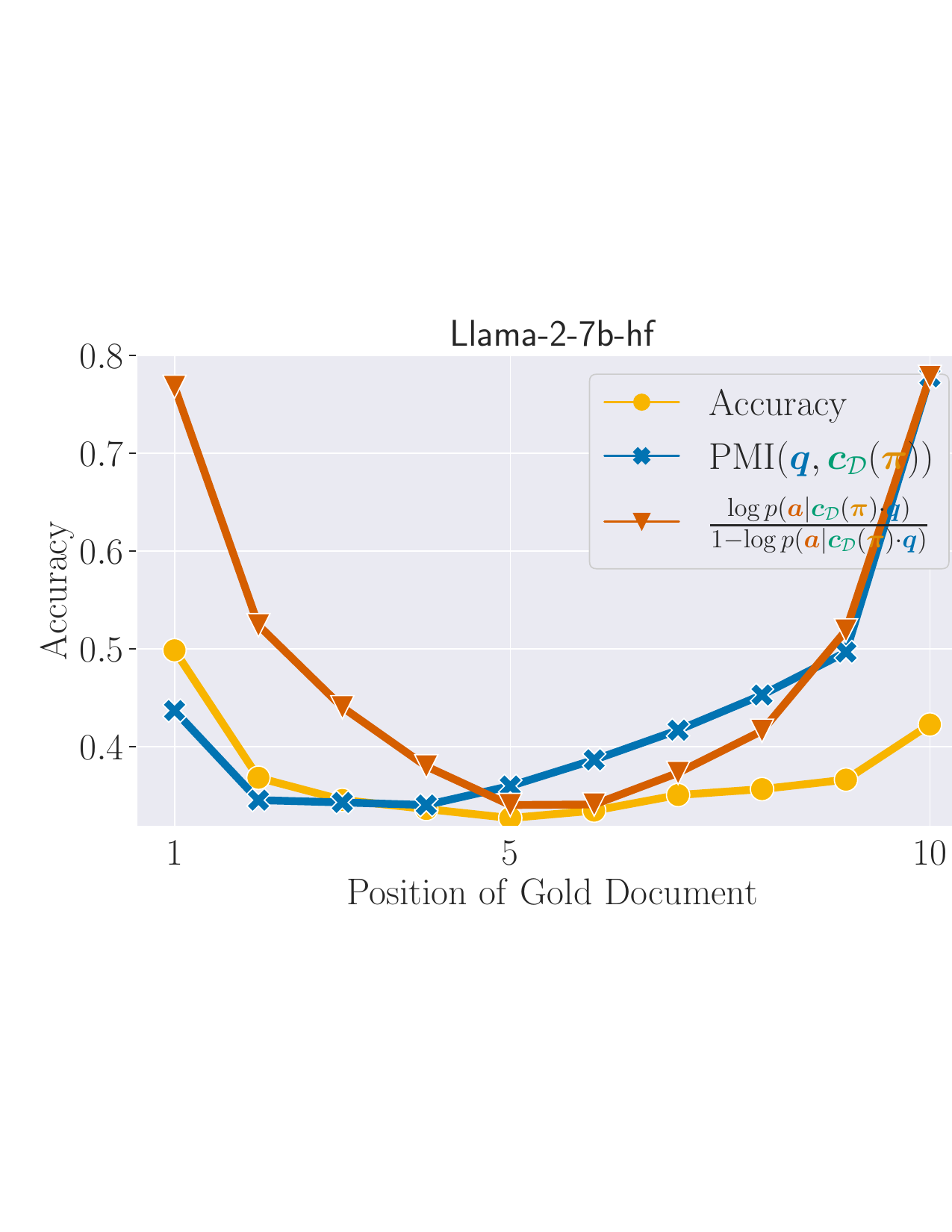}
        \caption{}
    \end{subfigure}
    \begin{subfigure}{0.48\textwidth}
        \centering
        \includegraphics[trim={0mm 60mm 0mm 70mm},clip,width=\linewidth]{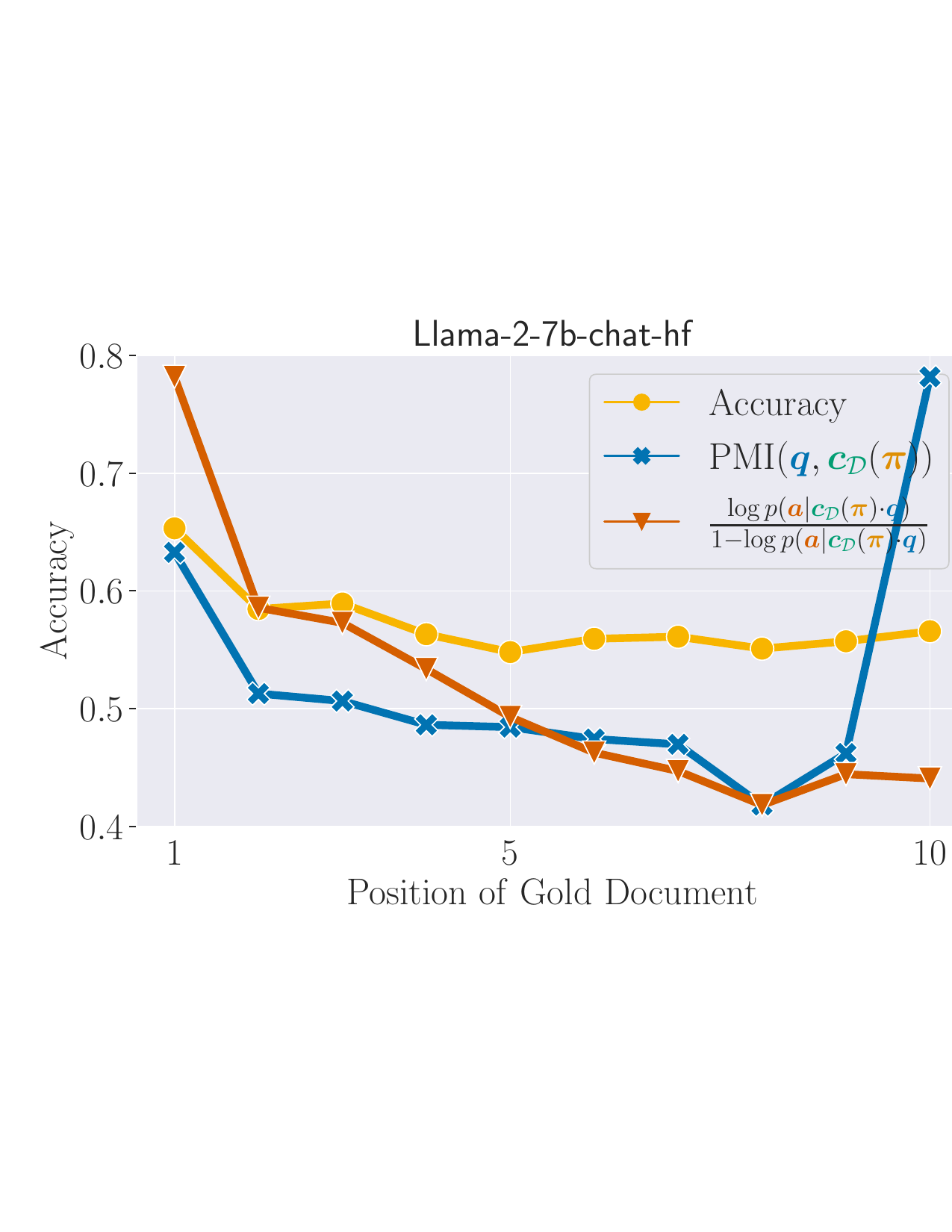}
        \caption{}
    \end{subfigure}
    \begin{subfigure}{0.48\textwidth}
        \centering
        \includegraphics[trim={0mm 60mm 0mm 70mm},clip,width=\linewidth]{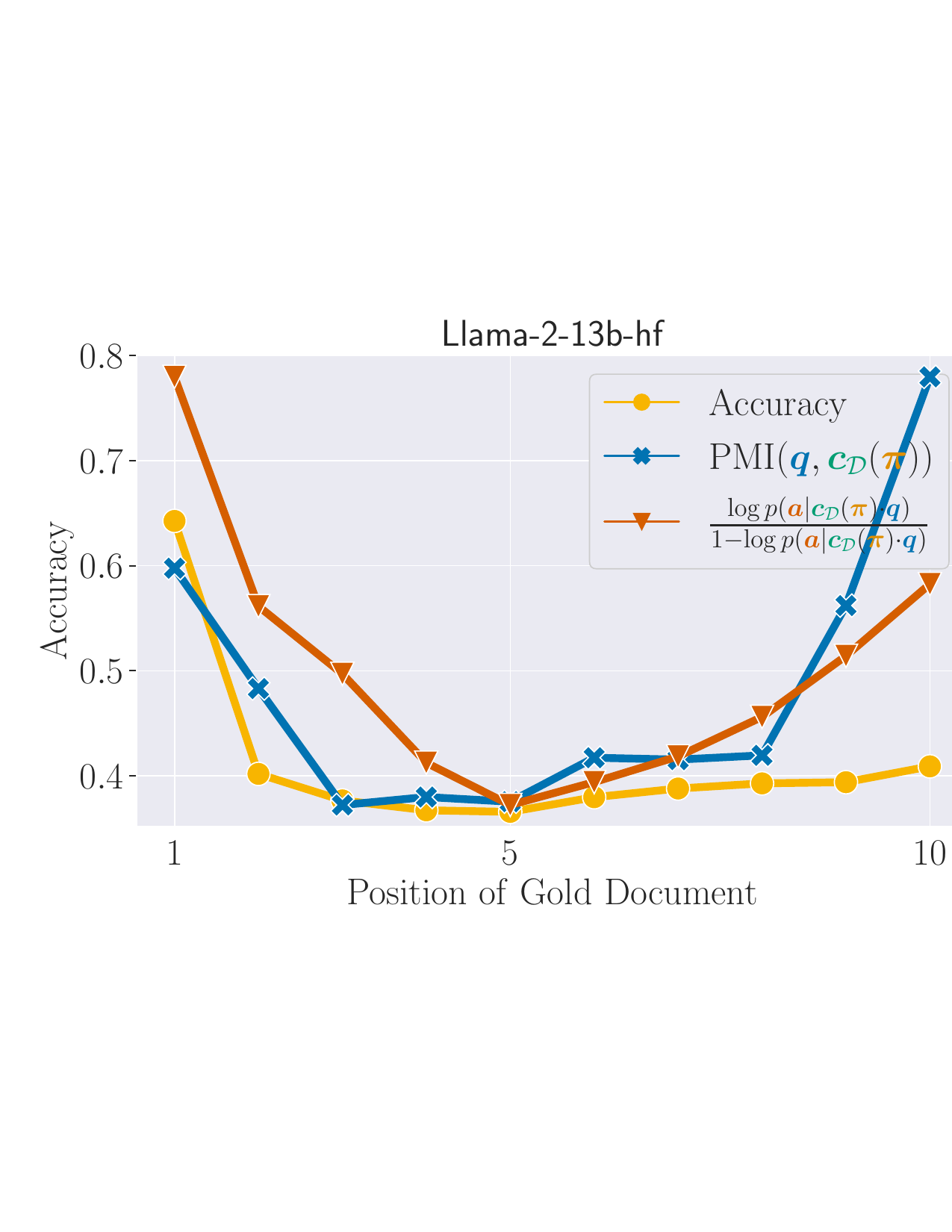}
        \caption{}
    \end{subfigure}
    \begin{subfigure}{0.48\textwidth}
        \centering
        \includegraphics[trim={0mm 60mm 0mm 70mm},clip,width=\linewidth]{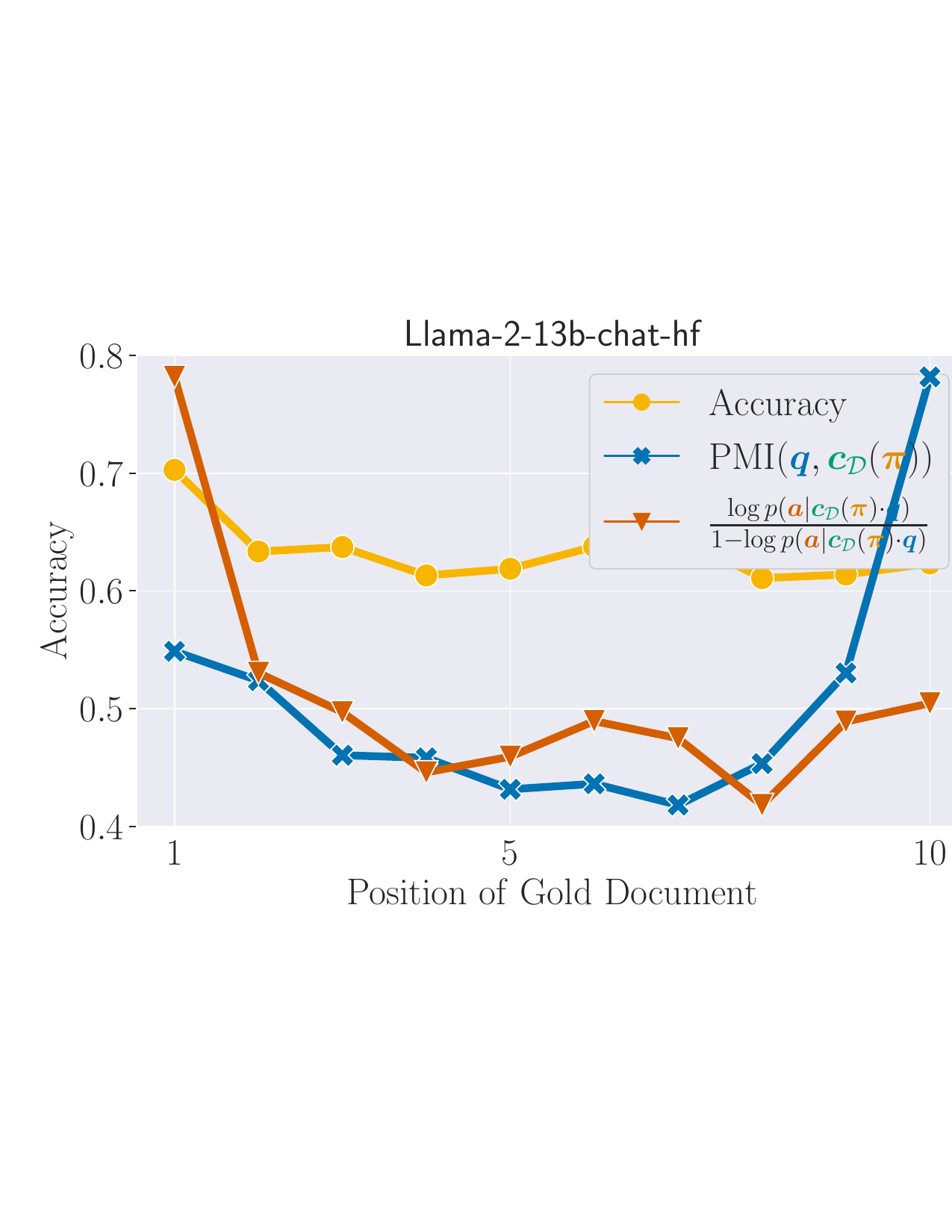}
        \caption{}
    \end{subfigure}
    \caption{QA accuracy, PMI, and log odds ratio of answer likelihood on 10 docs.}
    \label{fig:10docs-llama-3-all-1}
\end{figure*}

\begin{figure*}[!ht]
    \centering
    \begin{subfigure}{0.48\textwidth}
        \centering
        \includegraphics[trim={0mm 60mm 0mm 70mm},clip,width=\linewidth]{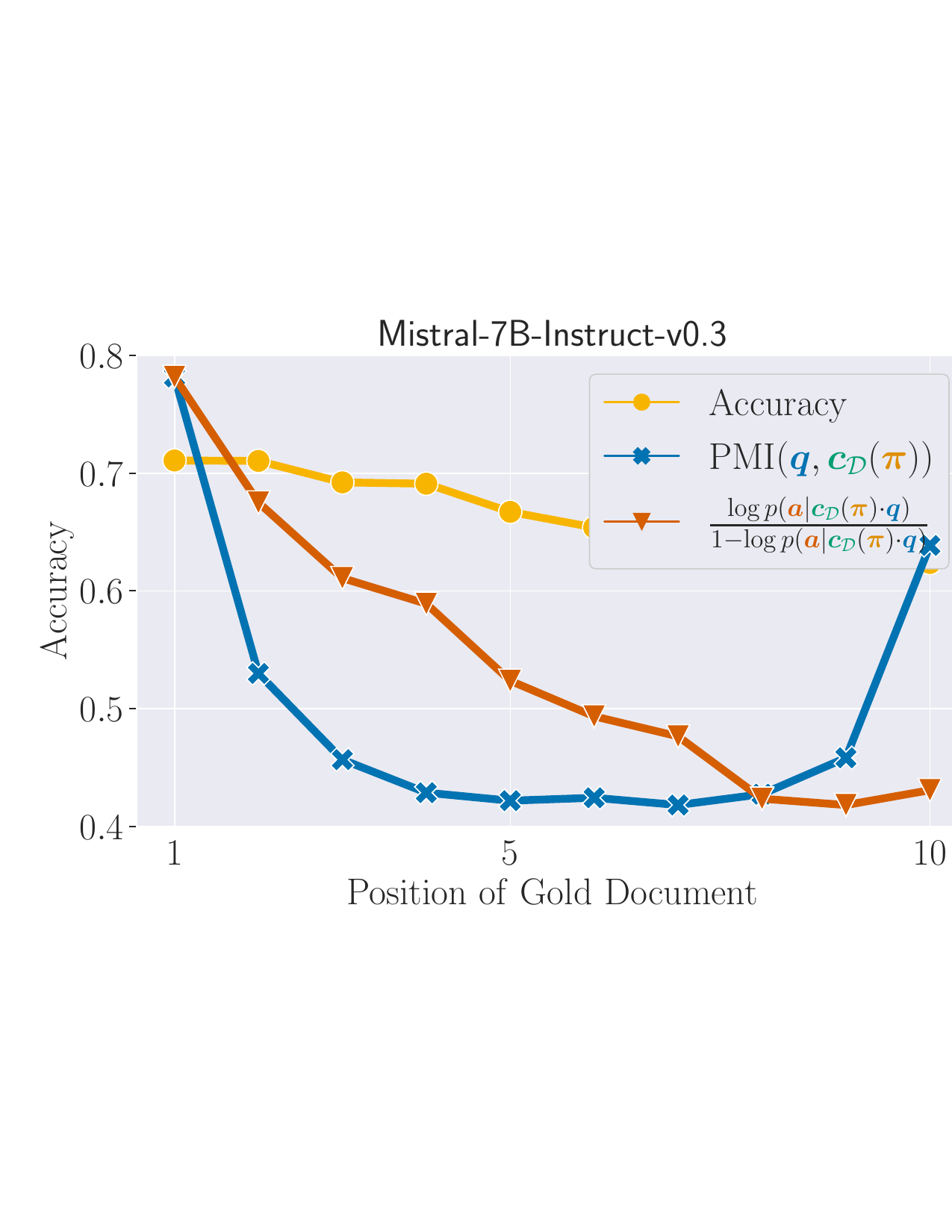}
        \caption{}
    \end{subfigure}
    \begin{subfigure}{0.48\textwidth}
        \centering
        \includegraphics[trim={0mm 60mm 0mm 70mm},clip,width=\linewidth]{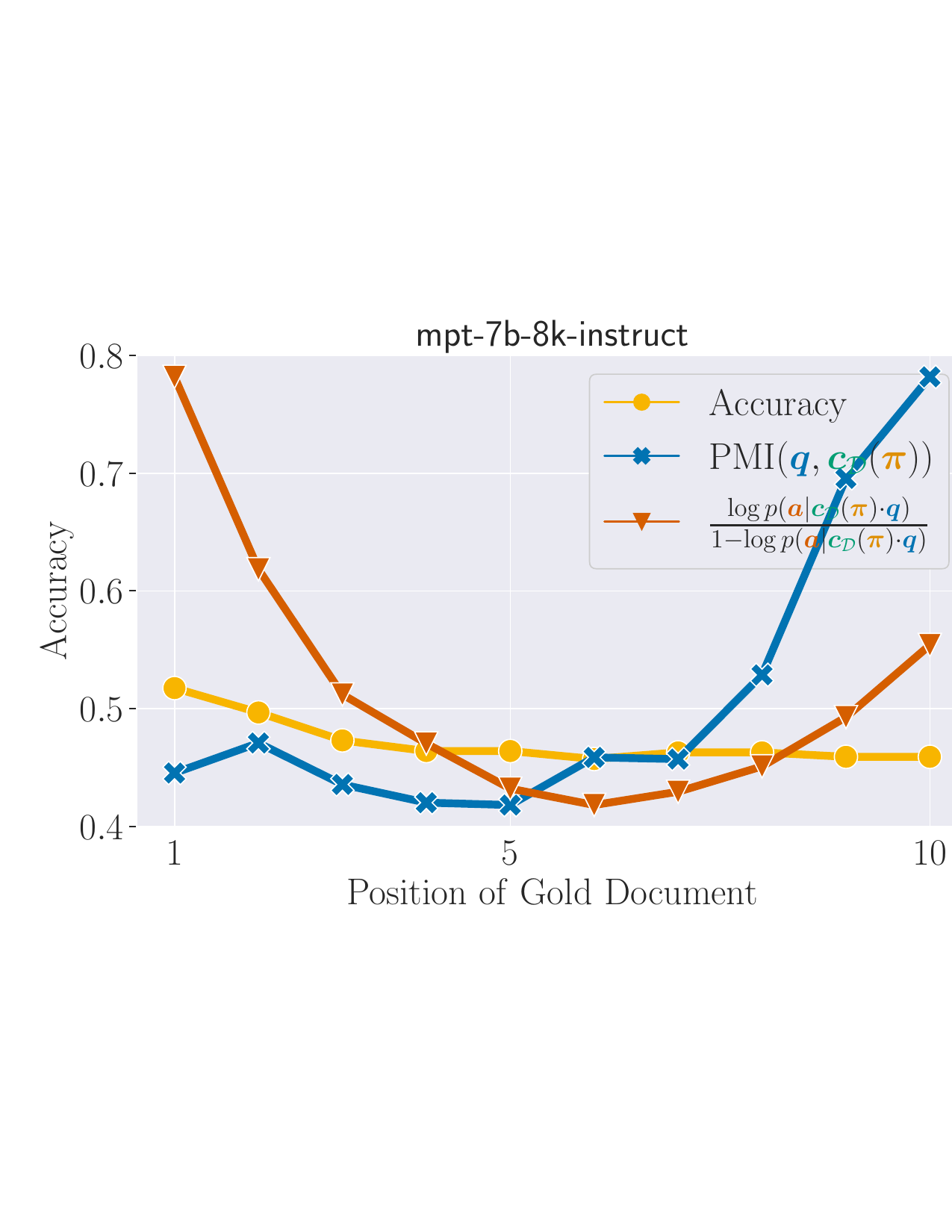}
        \caption{}
    \end{subfigure}
    \caption{QA accuracy, PMI, and log odds ratio of answer likelihood on 10 docs.}
    \label{fig:10docs-llama-3-all-2}
\end{figure*}

\begin{figure*}[!ht]
    \centering
    \begin{subfigure}{0.48\textwidth}
        \centering
        \includegraphics[trim={0mm 60mm 0mm 70mm},clip,width=\linewidth]{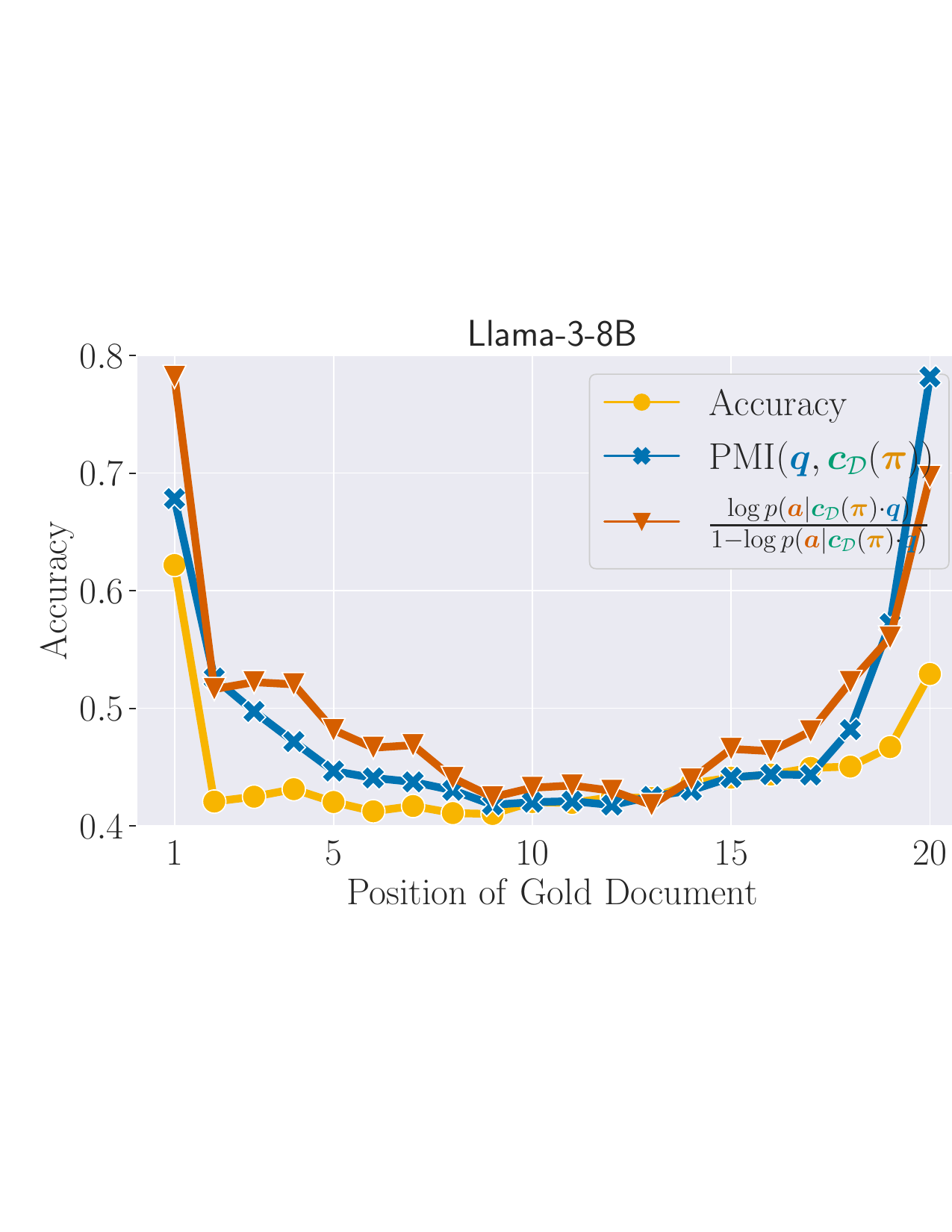}
        \caption{}
    \end{subfigure}
    \begin{subfigure}{0.48\textwidth}
        \centering
        \includegraphics[trim={0mm 60mm 0mm 70mm},clip,width=\linewidth]{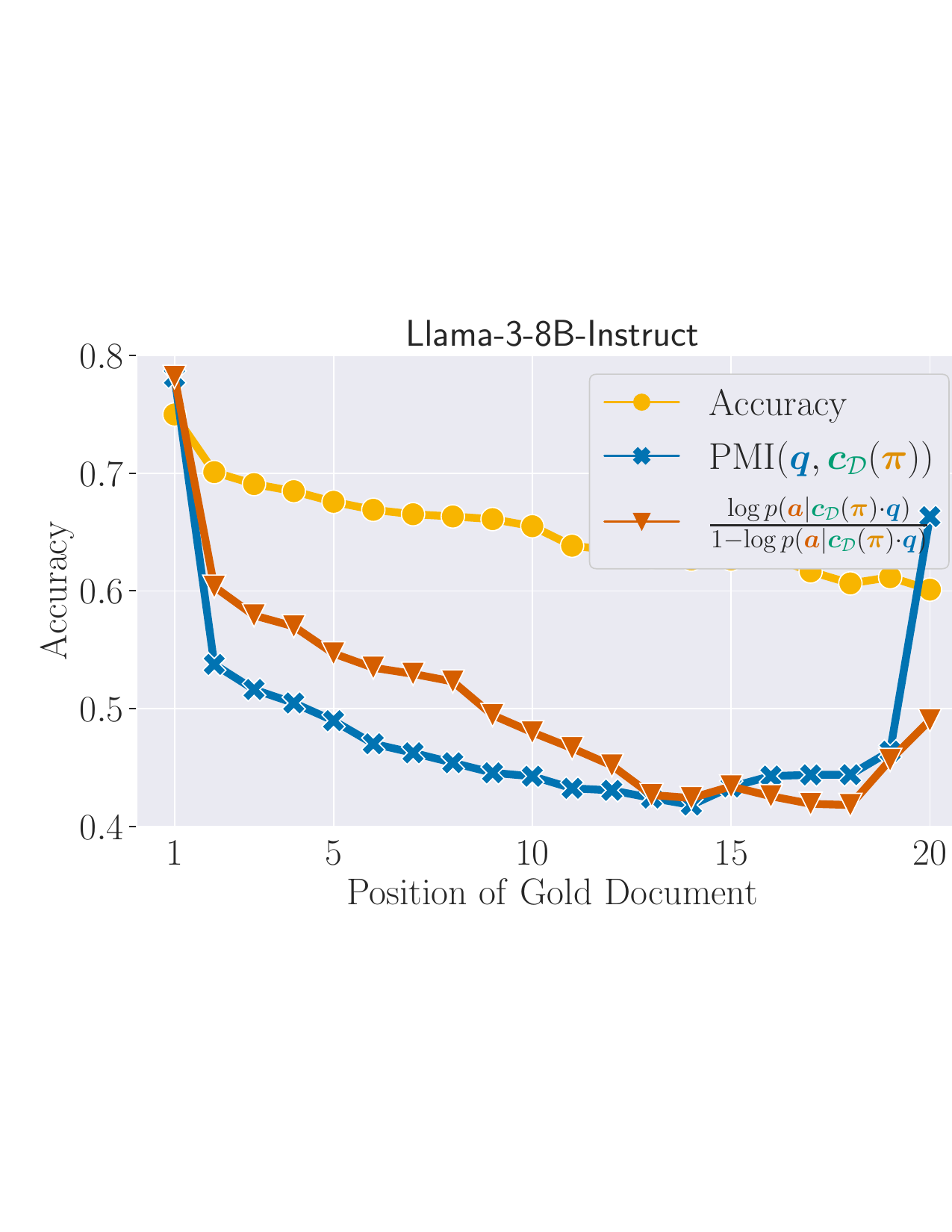}
        \caption{}
    \end{subfigure}
    \begin{subfigure}{0.48\textwidth}
        \centering
        \includegraphics[trim={0mm 60mm 0mm 70mm},clip,width=\linewidth]{plots/all_results_nqopen/one_model_result_20.Llama-3.1-8B.combined..pdf}
        \caption{}
    \end{subfigure}
    \begin{subfigure}{0.48\textwidth}
        \centering
        \includegraphics[trim={0mm 60mm 0mm 70mm},clip,width=\linewidth]{plots/all_results_nqopen/one_model_result_20.Llama-3.1-8B-Instruct.combined..pdf}
        \caption{}
    \end{subfigure}
    \begin{subfigure}{0.48\textwidth}
        \centering
        \includegraphics[trim={0mm 60mm 0mm 70mm},clip,width=\linewidth]{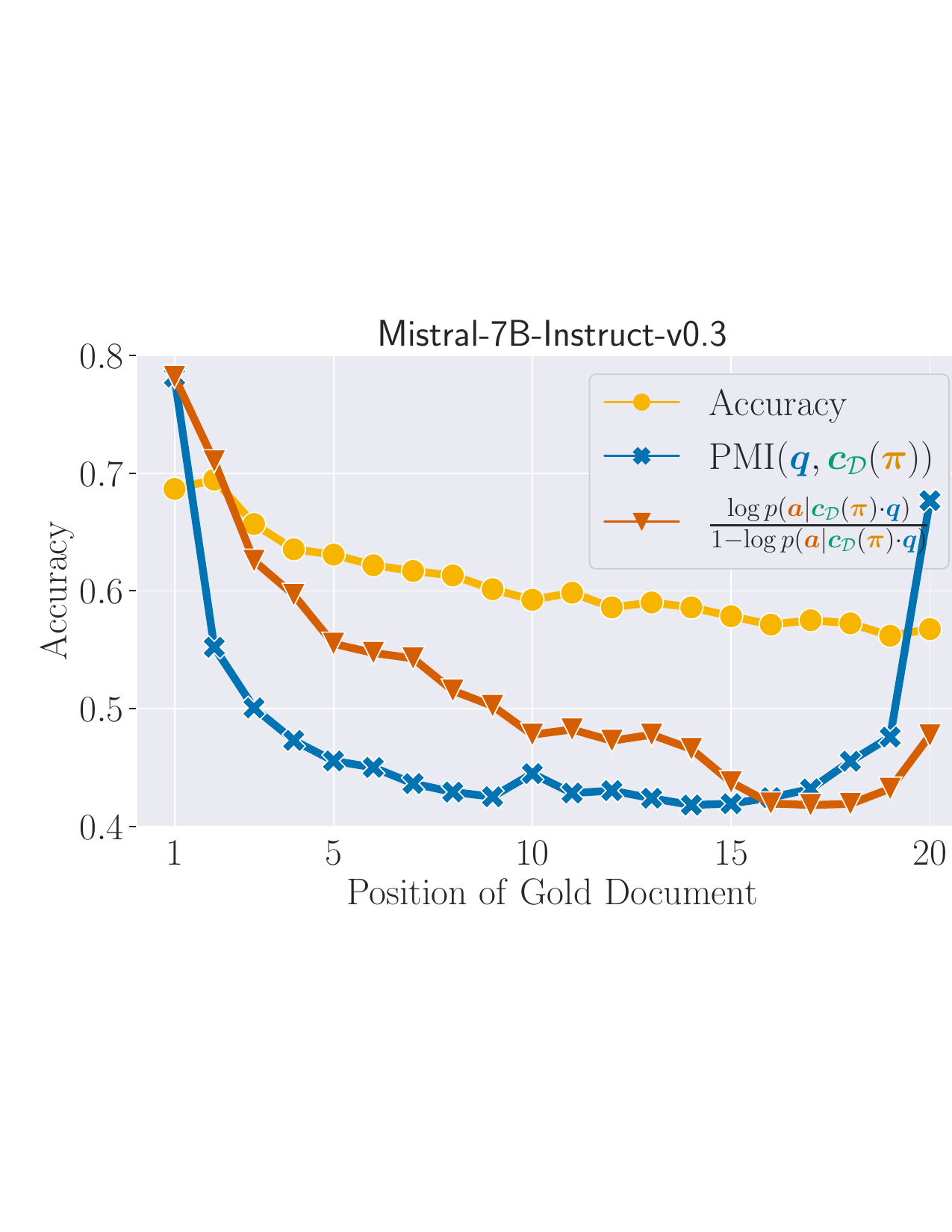}
        \caption{}
    \end{subfigure}
    \begin{subfigure}{0.48\textwidth}
        \centering
        \includegraphics[trim={0mm 60mm 0mm 70mm},clip,width=\linewidth]{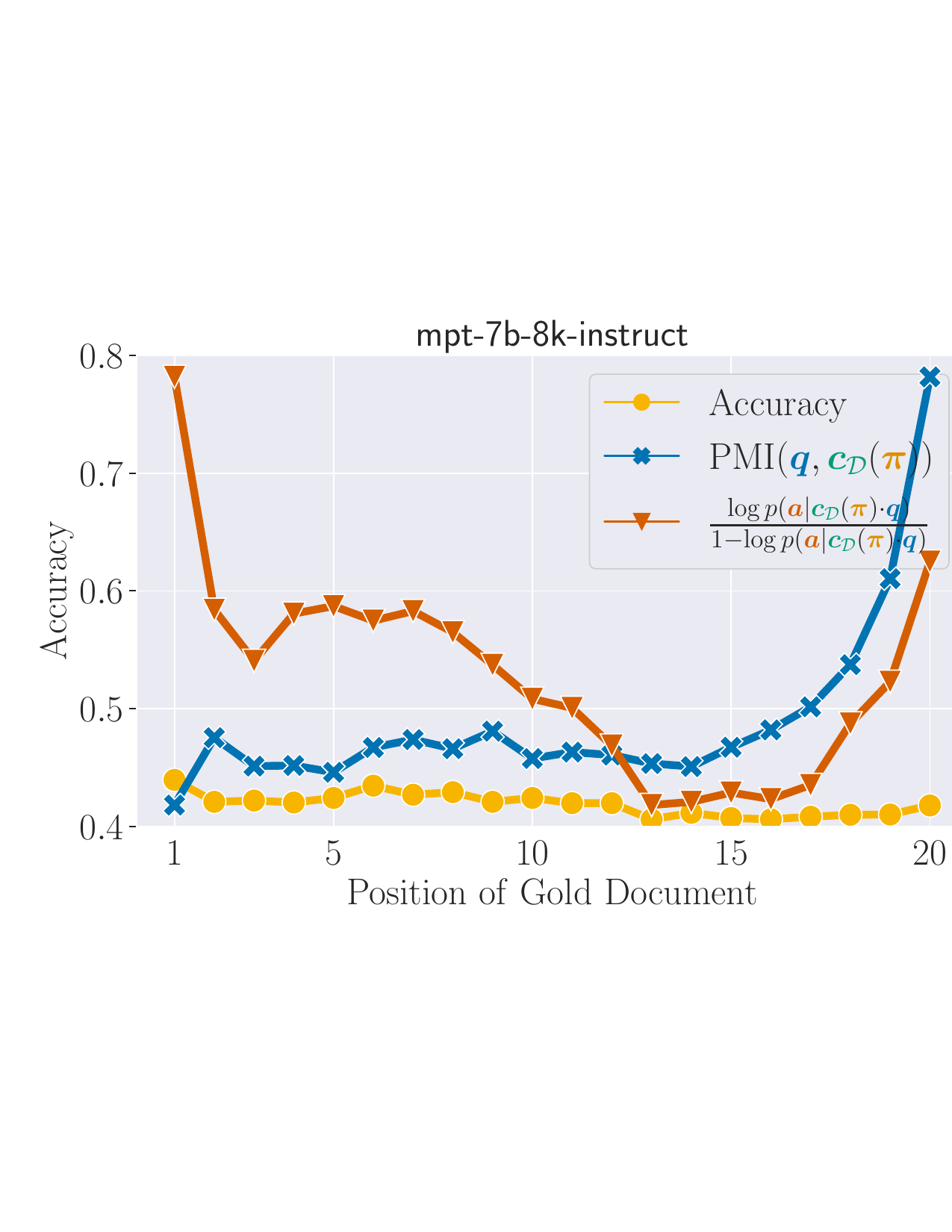}
        \caption{}
    \end{subfigure}
    \caption{QA accuracy, PMI, and log odds ratio of answer likelihood on 20 docs.}
    \label{fig:20docs-llama-3-all-1}
\end{figure*}

\begin{figure*}[!ht]
    \centering
    \begin{subfigure}{0.48\textwidth}
        \centering
        \includegraphics[trim={0mm 60mm 0mm 70mm},clip,width=\linewidth]{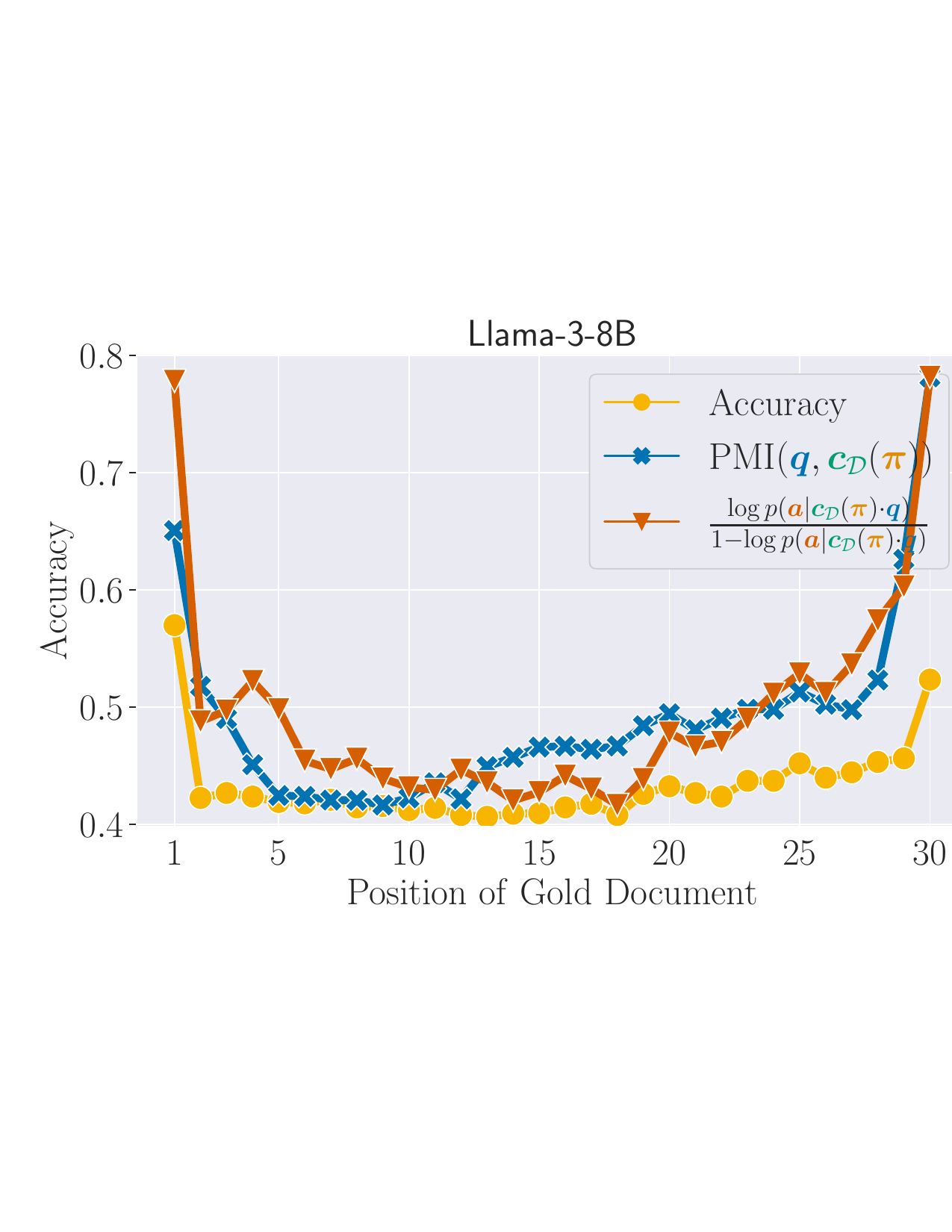}
        \caption{}
    \end{subfigure}
    \begin{subfigure}{0.48\textwidth}
        \centering
        \includegraphics[trim={0mm 60mm 0mm 70mm},clip,width=\linewidth]{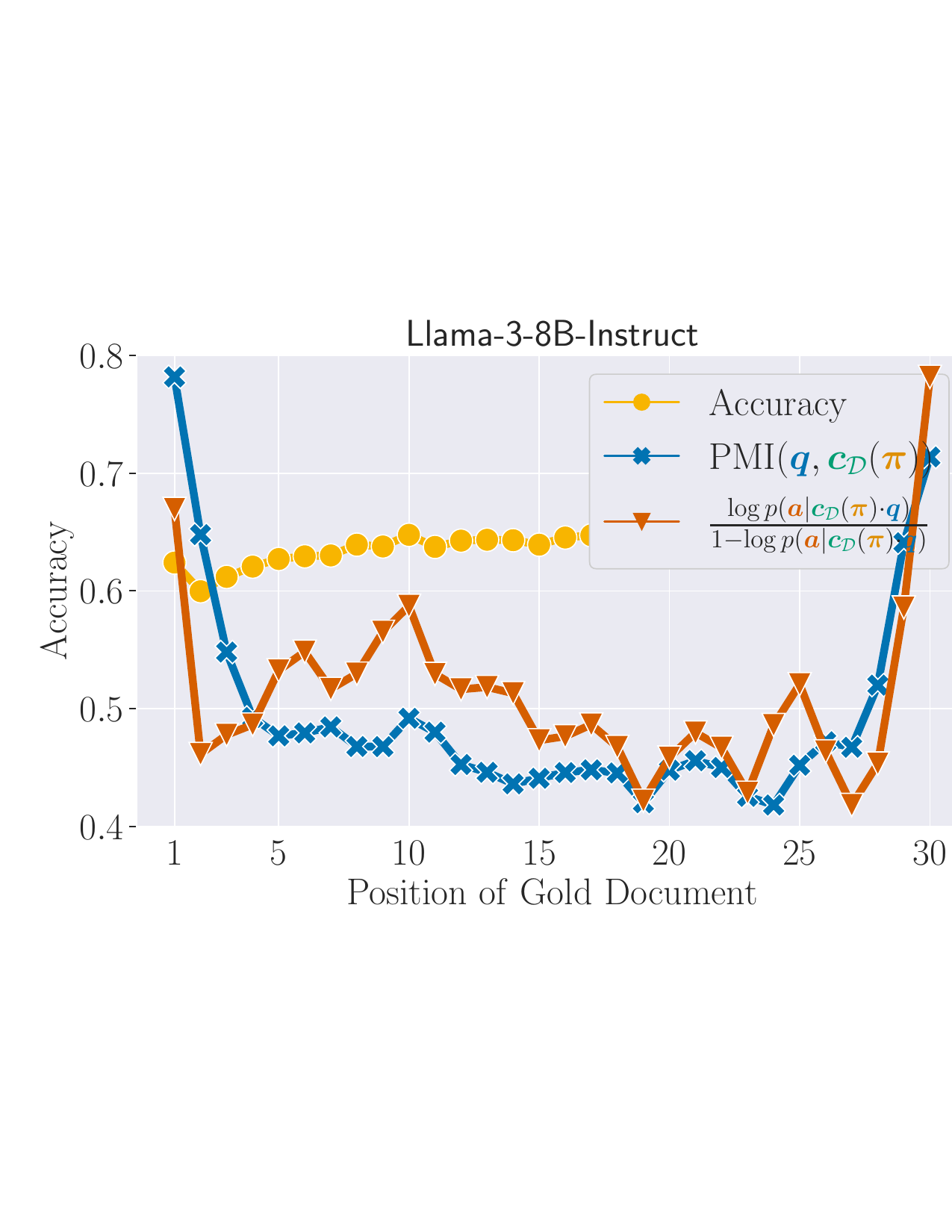}
        \caption{}
    \end{subfigure}
    \begin{subfigure}{0.48\textwidth}
        \centering
        \includegraphics[trim={0mm 60mm 0mm 70mm},clip,width=\linewidth]{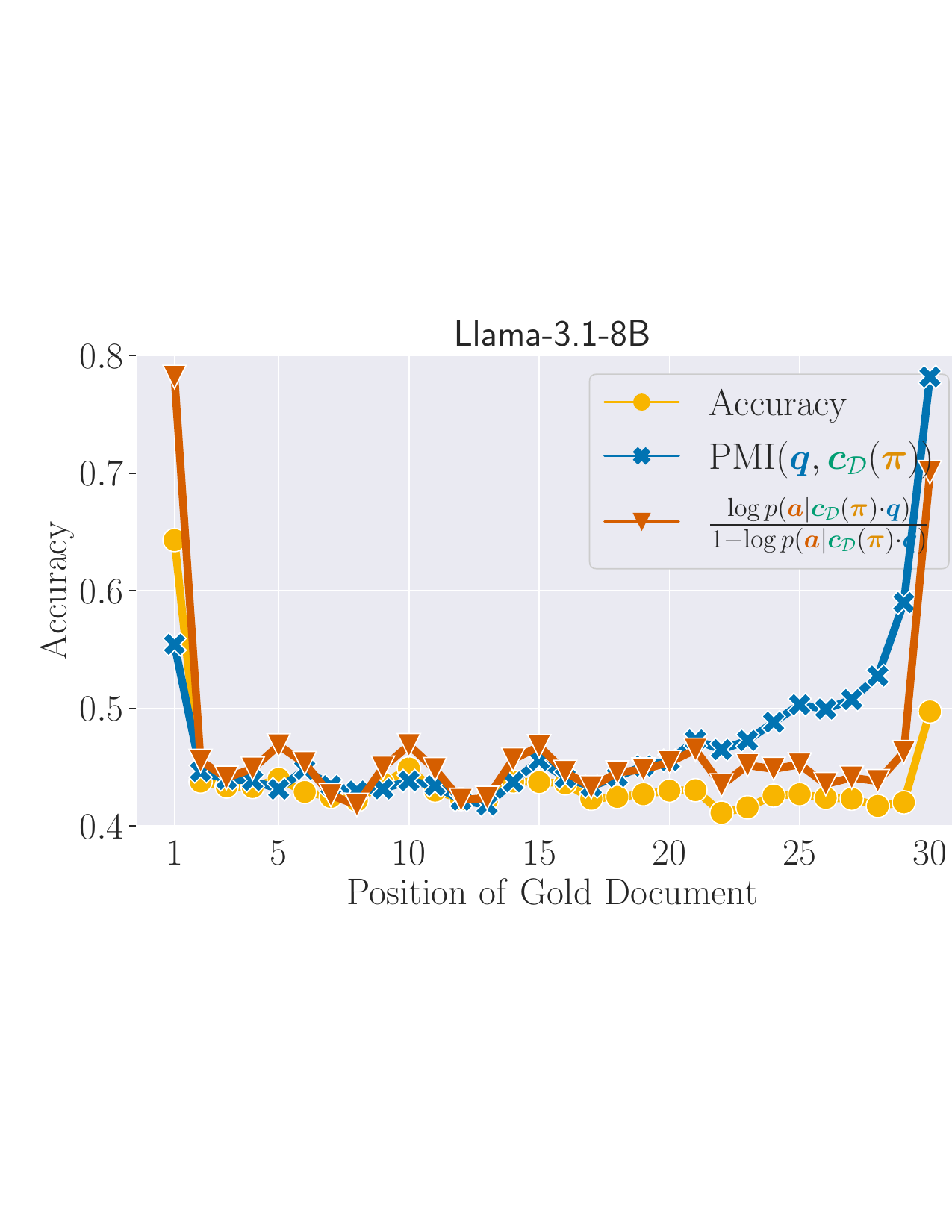}
        \caption{}
    \end{subfigure}
    \begin{subfigure}{0.48\textwidth}
        \centering
        \includegraphics[trim={0mm 60mm 0mm 70mm},clip,width=\linewidth]{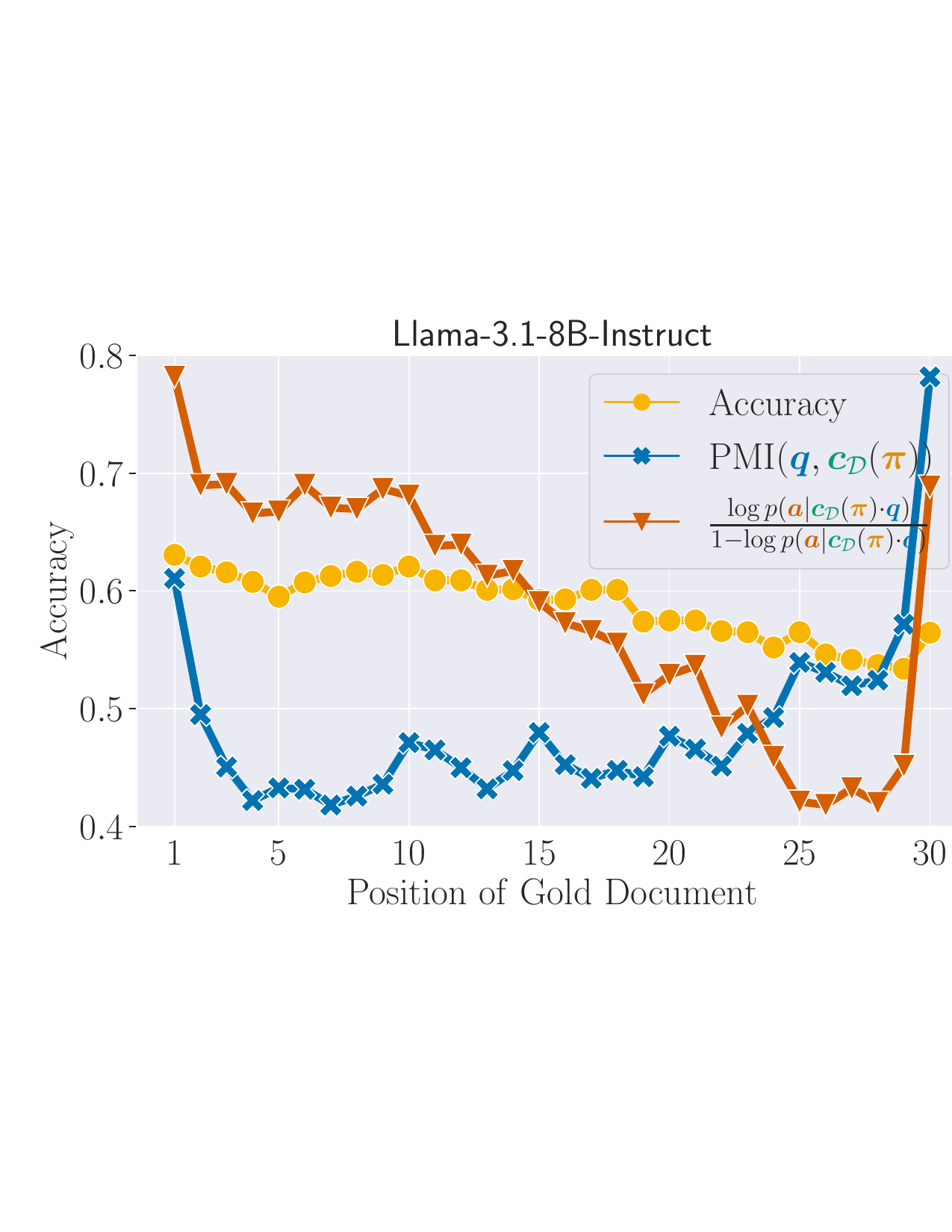}
        \caption{}
    \end{subfigure}
    \begin{subfigure}{0.48\textwidth}
        \centering
        \includegraphics[trim={0mm 60mm 0mm 70mm},clip,width=\linewidth]{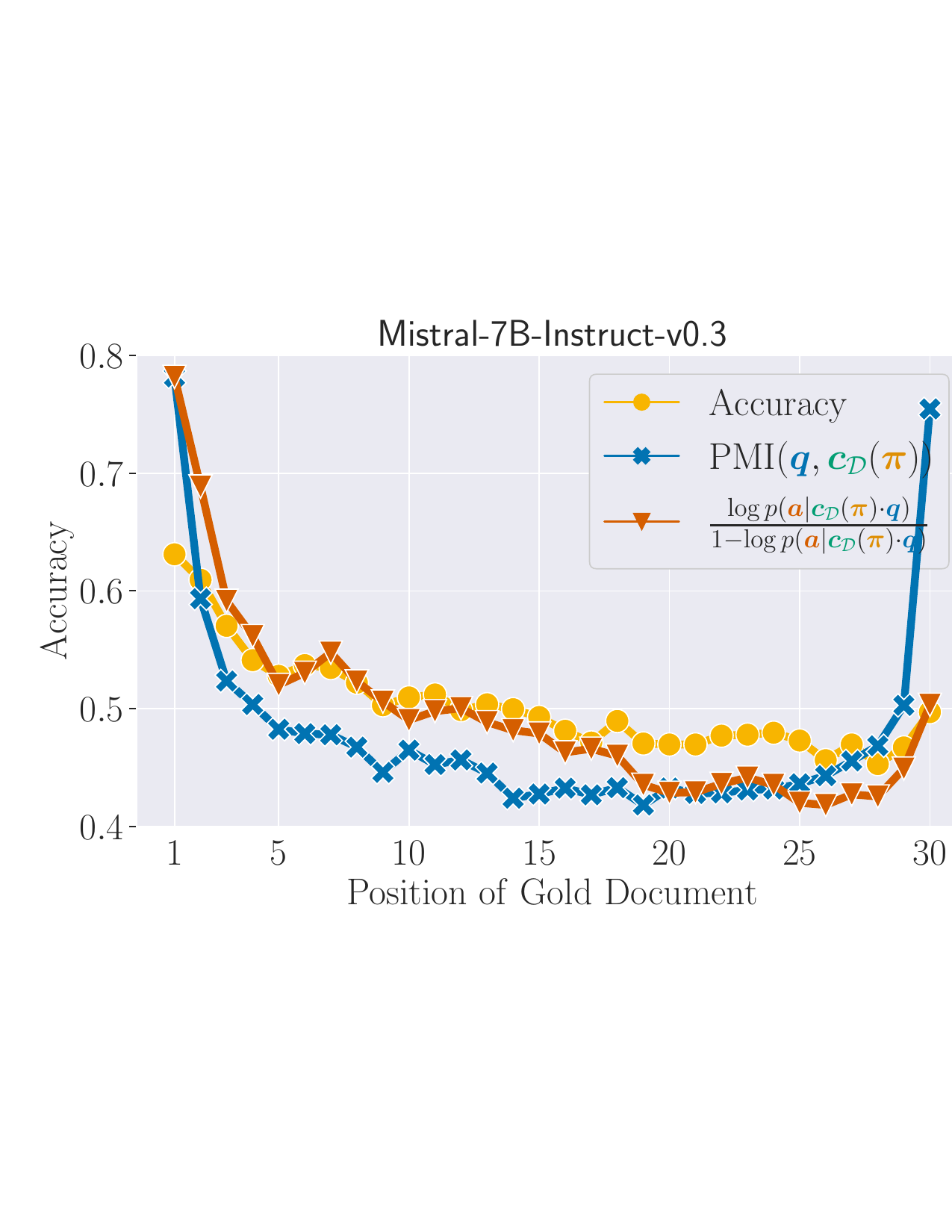}
        \caption{}
    \end{subfigure}
    \begin{subfigure}{0.48\textwidth}
        \centering
        \includegraphics[trim={0mm 60mm 0mm 70mm},clip,width=\linewidth]{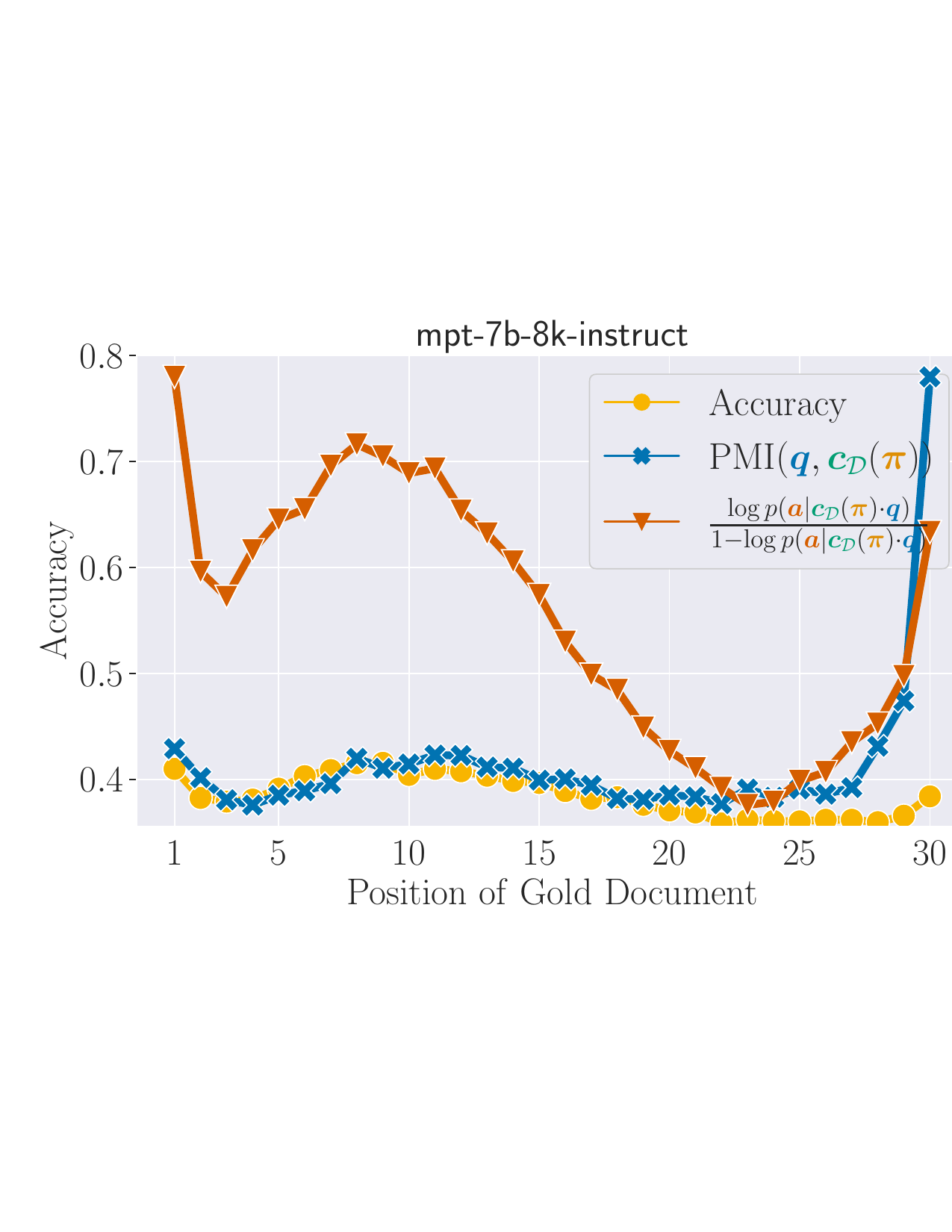}
        \caption{}
    \end{subfigure}
    \caption{QA accuracy, PMI, and log odds ratio of answer likelihood on 30 docs.}
    \label{fig:30docs-llama-3-all-1}
\end{figure*}

\section{Synthetic Experiment}

\begin{figure}[!t]
\begin{tcolorbox} \tt \small
\begin{lstlisting}[language=Python,basicstyle=\tiny]
{
"ebad6435-1e86-4b9e-836a-9a88a8c93743": 
    "c13ac8fc-81fe-408a-bf8f-914b6b8dc310",
"33e652a0-fbcd-4abd-9935-14043ef82de9": 
    "339ffb66-ec38-4d2a-a99f-67755d87eec3",
"7a990232-7ddd-41b6-a8eb-1c61dc96da3c": 
    "0d233f17-9d85-441e-868c-aa682d3dbbe7",
...
}
Key: "7a990232-7ddd-41b6-a8eb-1c61dc96da3c"
Value: "0d233f17-9d85-441e-868c-aa682d3dbbe7"
\end{lstlisting}
\end{tcolorbox}
\caption{Example input for key--value retrieval task.}\label{fig:kv-retrieval-example}
\end{figure}

\begin{table}[]
    \centering \small
    \begin{tabular}{ccc} \toprule
        Key Location   & $\lmprefix(\question \mid \context)$  & $\lmprefix( \answer \mid \context \bcdot \question)$ \\ \midrule
        0   & -3.96 & -0.76 \\
        34  & -6.60 & -0.87 \\
        69  & -7.87 & -0.82 \\
        104 & -8.70 & -1.08 \\
        139 & -8.03 & -0.76 \\ \bottomrule
    \end{tabular}
    \caption{Question likelihood and answer likelihood on synthetic key--value retrieval task using Llama-3.1-8B-Instruct model.}
    \label{tab:llama-3.1-8b-inst-kv}
\end{table}
\begin{table}[]
    \centering \small
    \begin{tabular}{ccc} \toprule
        Key Location   & $\lmprefix(\question \mid \context)$  & $\lmprefix( \answer \mid \context \bcdot \question)$ \\ \midrule
        0   & -3.01 & -0.08 \\
        34  & -6.22 & -0.15 \\
        69  & -6.86 & -0.31 \\
        104 & -7.87 & -0.27 \\
        139 & -7.33 & -0.07 \\ \bottomrule
    \end{tabular}
    \caption{Question likelihood and answer likelihood on synthetic key--value retrieval task using Llama-3.1-8B model.}
    \label{tab:tab:llama-3.1-8b-kv}
\end{table}
\begin{table}[]
    \centering \small
    \begin{tabular}{ccc} \toprule
        Key Location   & $\lmprefix(\question \mid \context)$  & $\lmprefix( \answer \mid \context \bcdot \question)$ \\ \midrule
        0   & -4.03 & -0.00 \\
        34  & -6.31 & -0.12 \\
        69  & -8.15 & -0.23 \\
        104 & -9.67 & -0.15 \\
        139 & -8.77 & -0.04 \\ \bottomrule
    \end{tabular}
    \caption{Question likelihood and answer likelihood on synthetic key--value retrieval task using Mistral-7B-Instruct model.}
    \label{tab:mistral-7b-inst-kv}
\end{table}

\onecolumn

\section{Proof of \Cref{prop:distribution}} \label{app:proof}
\propdistribution*
\begin{proof} 
First note that, by Bayes' rule, we have
\begin{equation}
    \lmprefix(\answer \mid  \contextDpi  \bcdot \question) = \frac{\lmprefix(\centerBox \answer \mid \contextDpi) \lmprefix(\question \mid \contextDpi \centerBox \answer)}{ \lmprefix( \contextDpi  \bcdot \question)}.
\end{equation}
Then, 
\begin{subequations}
\begin{align}
      \log &\frac{\lmprefix(\answer \mid \question \bcdot \contextDpi)}{1-\lmprefix(\answer \mid \question \bcdot \contextDpi)} =    \log \frac{\lmprefix(\answer \mid \question \bcdot \contextDpi)}{\sum_{\baranswer \in \kleene{\alphabet}} \mathbbm{1}\{\baranswer \not\preceq \answer\} \lmprefix(\baranswer \mid \question \bcdot \contextDpi)} \\
       &=    \log \frac{\frac{\lmprefix(\centerBox \answer \mid \contextDpi) \lmprefix(\question \mid \contextDpi \centerBox \answer)}{ \lmprefix( \contextDpi  \bcdot \question)}}{\sum_{\baranswer \in \kleene{\alphabet}} \mathbbm{1}\{\baranswer \not\preceq \answer\} \frac{\lmprefix(\centerBox \baranswer \mid \contextDpi) \lmprefix(\question \mid \contextDpi \centerBox \baranswer)}{\lmprefix(\contextDpi  \bcdot \question)}} \qquad \qquad \qquad\qquad \qquad \text{\color{gray} (Bayes' rule)} \\
      &=    \log \frac{\lmprefix(\centerBox \answer \mid \contextDpi) \lmprefix(\question \mid \contextDpi \centerBox \answer)}{\sum_{\baranswer \in \kleene{\alphabet}} \mathbbm{1}\{\baranswer \not\preceq \answer\} \lmprefix(\centerBox \baranswer \mid \contextDpi) \lmprefix(\question \mid \contextDpi \centerBox \baranswer)} \\
     &=    \log \frac{\lmprefix(\centerBox \answer \mid \contextDpi) \lmprefix(\question \mid \contextDpi)}{\left(\sum_{\baranswer \in \kleene{\alphabet}} \mathbbm{1}\{\baranswer \not\preceq \answer\} \lmprefix(\centerBox \baranswer \mid \contextDpi)\right) \lmprefix(\question)} \qquad \qquad\qquad \qquad\qquad  \text{\color{gray} (\Cref{ass:assumption})} \\
     &=    \log \frac{\lmprefix(\centerBox \answer \mid \contextDpi) }{\sum_{\baranswer \in \kleene{\alphabet}} \mathbbm{1}\{\baranswer \not\preceq \answer\} \lmprefix(\centerBox \baranswer \mid \contextDpi)}\frac{\lmprefix(\question \mid \contextDpi)}{\lmprefix(\question)} \\
     &=    \underbrace{\log \frac{\lmprefix(\centerBox \answer \mid \contextDpi) }{\sum_{\baranswer \in \kleene{\alphabet}} \mathbbm{1}\{\baranswer \not\preceq \answer\} \lmprefix(\centerBox \baranswer \mid \contextDpi)}}_{\defequals C(\answer, \contextDpi)} + \log \frac{\lmprefix(\question \mid \contextDpi)}{\lmprefix(\question)} \\
    &=  \pmi(\question,\contextDpi) + C(\answer, \contextDpi),
\end{align}
\end{subequations}
where $C(\answer, \contextDpi)$ is constant with respect to $\question$.
\end{proof}

\end{document}